\documentclass{article}

\usepackage{microtype}
\usepackage{graphicx}
\usepackage{subcaption}
\usepackage{booktabs} 

\usepackage{hyperref}

 \usepackage[preprint]{icml2026}

\usepackage{amsmath}
\usepackage{amssymb}
\usepackage{mathtools}
\usepackage{amsthm}
\usepackage{bbm}

\usepackage{multirow}
\usepackage[capitalize,noabbrev]{cleveref}

\theoremstyle{plain}
\newtheorem{theorem}{Theorem}[section]
\newtheorem{proposition}[theorem]{Proposition}

\theoremstyle{definition}

\theoremstyle{remark}
\newtheorem{remark}[theorem]{Remark}

\usepackage[textsize=tiny]{todonotes}

\icmltitlerunning{Semi-Supervised Gaze Estimation via Disentangled Subspace Contrastive Learning}

\begin{document}

\twocolumn[
  \icmltitle{Semi-Supervised Gaze Estimation via Disentangled Subspace Contrastive Learning}



  \icmlsetsymbol{equal}{*}

 \begin{icmlauthorlist}
    \icmlauthor{Qida Tan}{1}
    \icmlauthor{Hongyu Yang}{1,2}
    \icmlauthor{Wenchao Du$^{\dagger}$}{1,2}
  \end{icmlauthorlist}

  \icmlaffiliation{1}{National Key Laboratory of Fundamental Science on Synthetic Vision, Sichuan University, Chengdu, China}
  \icmlaffiliation{2}{College of Computer Science, Sichuan University, Chengdu, China}

  \icmlcorrespondingauthor{Wenchao Du}{wenchaodu.cs@scu.edu.cn}


  \icmlkeywords{Gaze Estimation, Semi-Supervised Learning, Disentangled Contrastive Learning}
  
  \vskip 0.3in
]



\printAffiliationsAndNotice{}  

\begin{abstract}
Appearance-based gaze estimation always suffers from poor generalization due to limited annotated samples and insufficient dataset diversity. Leading approaches adopt weakly supervised learning to generate large-scale pseudo-labeled data from unconstrained real-world scenarios, aiming to mitigate the domain shifts. In this work, we devise a simple yet effective semi-supervised learning architecture that leverages unlabeled data to enhance domain generalization, thereby reducing reliance on labor-intensive manual annotations. Our key insight is to impose Jacobian regularization to disentangle feature representations into discriminative subspaces dedicated to specific gaze components, such as pitch and yaw angles. We further exploit the intrinsic ordinal ranking within each subspace for contrastive learning, enabling the model to learn robust gaze representations from a small set of labeled samples and an abundance of unlabeled ones. This ultimately yields our Disentangled Subspace Contrastive Learning (DSCL) framework. Extensive experiments on multiple benchmarks verify that the proposed DSCL is plug-and-play, achieving competitive performance using only 20\%, 10\%, and even 5\% of the annotated data under both in-domain and cross-domain evaluation settings. The public code is available at \href{https://github.com/da60266/DSCL}{https://github.com/da60266/DSCL}.

\end{abstract}

\section{Introduction}
Estimating accurate 3D gaze direction is crucial for understanding human behaviors and intentions, and provides substantial support for a wide range of practical applications \cite{stellmach2011designing,steil2018fixation,yi2022gazedock,lengenfelder2023pilot,bao2023exploring}, such as human–system interaction, mental fatigue detection, and AR/VR systems. Appearance-based gaze estimation \cite{chen2018appearance,cheng2022gazetr,wang2023gazecaps}, which directly predicts 3D gaze vectors from monocular images and videos, has attracted increasing attention in recent years. Nevertheless, constrained by the scarcity of high-quality annotated samples and the limited diversity of existing datasets, models trained on a single dataset often suffer from severe performance degradation when evaluated on unseen datasets. Variations in object appearance, background scenes, and image quality make it extremely challenging to learn robust gaze representations across domains \cite{xu2023learning}.

These challenges motivate recent advances on domain-adaptation-based and domain-generalization-based gaze estimation \cite{liu2021generalizing, bao2022generalizing,cheng2022puregaze,cai2023source,yin2024clip,bao2024feature}. The former requires additional samples from the target domain to fine-tune the model to adapt to a new data domain; the latter aims to learn a generalized gaze representation from a noisy distribution using only source-domain data. More recently, exploiting large-scale unlabeled data to assist learning generalized gaze representation has become a new trend. Leading works focus on extracting the weakly supervised signal from unlabeled samples to supervise gaze feature representation learning, e.g., mutual-gaze \cite{kothari2021weakly} and gaze-following \cite{vuillecard2025enhancing} from gaze-interaction scenarios, and developing a pseudo-annotation strategy. Moreover, unsupervised pretraining is also explored to disentangle gaze-related representation from facial appearance clues and then applied to the downstream gaze estimation \cite{bao2024unsupervised}. Though effective, these methods are still constrained by the limited diversity of the training sample and the number of high-quality labeled data.

In light of these limitations, exploiting unlabeled data to reduce reliance on high-quality annotated samples is both feasible and practical for real-world applications. Based on it, we extend semi-supervised learning (SSL) to generalized gaze estimation, where only a small fraction of data is labeled while the majority remains unlabeled. To this end, we propose a Disentangled Subspace Contrastive Learning (DSCL) framework that integrates the strengths of representation disentanglement and contrastive learning into a unified SSL paradigm. More specifically, DSCL first adopts Jacobian regularization to disentangle gaze representations for the regression of individual gaze components using limited labeled data. Subsequently, contrastive learning is conducted on these disentangled feature subspaces. We leverage the implicit label distance relationships among unlabeled samples and exploit the ordinal ranking derived from their feature similarity matrix to implement supervised contrastive learning, ultimately yielding robust gaze representations.

We provide theoretical analysis and empirical results demonstrating that our DSCL achieves competitive performance using only 20\%, 10\%, and even 5\% labeled data under both in-domain and cross-domain settings, compared with fully supervised training. This makes our method highly practical for real-world gaze estimation. To the best of our knowledge, DSCL is \textit{ the first to extend semi-supervised contrastive learning to leverage unlabeled data for gaze estimation.} In summary, our contributions are summarized: 1) Propose DSCL, a novel semi-supervised contrastive learning framework that pioneers the exploitation of unlabeled data for generalizable gaze estimation; 2) Integrate the merits of disentangling representation and contrastive learning into a unified SSL architecture, enabling more stable and robust gaze representation learning; 3) Proposed DSCL is plug-and-play, which achieves competitive performance in both in-domain and cross-domain gaze evaluation settings by utilizing merely 20\%, 10\%, and even 5\% labeled data, compared with state-of-the-art fully supervised methods.

\section{Related Work}
\subsection{Appearance-based Gaze Estimation}
Appearance-based gaze estimation \cite{cheng2024appearance} typically regresses gaze direction directly from facial images and has witnessed remarkable progress in recent years. Enabled by deep learning, these methods yield promising performance under in-domain evaluation settings. However, their success relies heavily on access to large-scale and diverse annotated training datasets. Collecting such data is notoriously expensive and labor-intensive, often requiring specialized hardware and substantial participant involvement \cite{zhang2017mpiigaze,fischer2018rt,kellnhofer2019gaze360}. Consequently, the limited scale and diversity of existing labeled data restrict model prediction accuracy and generalization capability, resulting in substantial performance degradation in cross-domain evaluations. To address this issue, domain generalization and domain adaptation methods have been extensively investigated to mitigate inherent domain shifts \cite{xu2023learning,liu2021generalizing,cai2023source,xu2024gaze}. More recently, extracting weakly supervised signals from unlabeled samples to facilitate generalizable gaze estimation has attracted growing interest. For instance, Kothari et al. \cite{kothari2021weakly} exploited geometric constraints from mutual gaze to generate pseudo 3D gaze labels. Vuillecard et al. \cite{vuillecard2025enhancing} leveraged gaze-following cues from gaze interaction scenarios to produce 3D pseudo labels, thereby boosting model generalization performance. Although these approaches achieve clear improvements in cross-domain evaluation, they still remain heavily dependent on high-quality labeled samples.

\subsection{Semi-Supervised Contrastive Learning}
Semi-supervised learning (SSL), which leverages a limited set of labeled samples and abundant unlabeled data to enhance model performance, has emerged as an effective paradigm for addressing the scarcity of labeled data in real-world applications \cite{yang2025umscs, cao2022open}. While initially focused on classification tasks \cite{berthelot2019mixmatch, sohn2020fixmatch}, SSL principles have been increasingly adapted for regression problems. For instance, Huang et al. \cite{huang2024rankup} proposed the RankUp framework to bridge the gap between semi-supervised classification and regression. Recently, contrastive learning has been incorporated into SSL to facilitate robust feature representation learning. By identifying positive and negative pairs based on their label relationships in the feature space, contrastive learning clusters similar samples together and provides an implicit supervisory signal for unlabeled data \cite{li2021comatch, yang2022class}. Dai et al. \cite{dai2023semi} were the first to propose the Semi-supervised Contrastive Learning (SCL) framework for solving general regression tasks with unlabeled data. Furthermore, state-of-the-art works have explored supervised contrastive learning to mitigate domain shift in gaze estimation \cite{yin2024clip, wang2022contrastive}. We also note a recent work, OMNIGAZE \cite{quomnigaze}, which claims to be the first to explore SSL for gaze estimation. However, it combined extremely large-scale labeled and unlabeled training samples and adopted a pseudo-labeling strategy based on multi-modal large models to provide additional supervision for unlabeled sample training, still falling under the category of weakly supervised learning frameworks. In contrast, our DSCL integrates disentangled representation learning into SCL, offering new insights for solving real-world gaze estimation tasks using only a small amount of labeled data.

\section{Methodology}
\subsection{Preliminaries}
To make this paper self-contained, we first briefly review the existing Semi-supervised Contrastive Learning for gaze estimation. Given a dataset $\mathcal{D}$ with $N$ samples, $\mathcal{D}:=\{ (x_i,y_i)\}_{i=1}^{L}$, where $x_i$ denote the input facial image, and $y_i$ is the corresponding gaze label, which is generally denoted by 2D Euler angle vector $(\phi, \psi)$ defined by pitch and yaw angles separately, and also expressed by a 3D gaze vector $(\mathrm{x}, \mathrm{y}, \mathrm{z})$, i.e., $\{y_i \in \mathbb{R}^{M} | M = 2,3\}$ is flexible according to different label settings. $\mathcal{D}':=\{x'_i\}^T_{i=1}$ denotes the unlabeled data consisting of input image only. The general appearance-based gaze estimation can be formulated as
\begin{equation}
z_i=E(x_i, \theta), \hat{y}_i = R(z_i, \beta),
\label{eq1}
\end{equation}
where the $E(\cdot)$ denotes an encoder, $R(\cdot)$ is the linear regressor, $\theta$ and $\beta$ are learnable parameters. $z$ is the encoded feature vector, which denotes the specific gaze representation, and $z'$ denotes feature for unlabeled data. The whole model can be trained through supervised learning with a mean square error (MSE) loss.

Contrastive learning aims to learn specific feature representations to reflect the label distance relationship in latent space, and is generally performed on normalized feature space. Thus, we denote the transformed $z_i$ as $\tilde{z}_i$ for the labeled sample. Let the supervised contrastive loss be $\mathcal{L}_{\textit{SC}} = \mathbb{E}_{x,x'} \| \langle E(x_j), E(x_i) \rangle - \mathcal{K}(y_i, y_j) \|^2$. The $\mathcal{K}$ denotes the kernel function to measure label distance, and $\langle \cdot, \cdot \rangle$ is the feature similarity function $\mathcal{S}$. The global minimum $E^*$ satisfies the isometry condition:
\begin{equation}
\begin{aligned}
	\left\langle E^*(x_i), E^*(x_j) \right\rangle &= \mathcal{K}(y_i, y_j), \\
	\text{s.t. } &(x_i, y_i), (x_j, y_j) \in \mathcal{D}.
	 \label{eq2}
\end{aligned}
\end{equation}
This implies that the geometry of the learned feature space $\mathcal{Z}$ is isomorphic to the label space $\mathcal{Y}$, which lets the feature with smaller label distance closer in the latent space. Previous work \cite{wang2022contrastive} has also proven $\mathcal{L}_{\textit{SC}}$ is useful for multi-target gaze estimation regression. We also provide the theoretical analysis of the consistency of supervised contrastive learning for multi-target regression in \cref{TA1}.

To exploit unlabeled data in semi-supervised settings, recent work \cite{dai2023semi} introduced an unsupervised contrastive constraint that refines feature representations via spectral ranking. However, this approach was fundamentally designed for single-target regression and cannot be directly applied to multi-target tasks like gaze estimation. The core limitation stems from an inherent dimensional mismatch: \textbf{\textit{The one-dimensional spectral rank inevitably collapses the target-specific ordinal relationships in a multidimensional label space, thereby introducing rank ambiguity.}}

\begin{proposition}
\label{TH1}
    Let $\mathcal{Y} \subseteq \mathbb{R}^M$ be a multi-target label space with $M \ge 2$. Let $\prec$ denote the strict component-wise partial order on $\mathcal{Y}$, where $y_i \prec y_j$ if and only if $y_{i,k} < y_{j,k}$ for all $k \in \{1, \dots, M\}$. There exists no scalar function $f: \mathcal{Y} \to \mathbb{R}$ that is strictly monotonic with respect to $\prec$ while preserving the independence of dimensions. Specifically, any scalar ranking $\mathcal{R}(z)$ introduces rank ambiguity, such that for distinct samples with conflicting dimensional changes (e.g., $\Delta y_{\cdot, 1} > 0, \Delta y_{\cdot, 2} < 0$), the scalar rank order is indeterminate and inconsistent with the true label space.
\end{proposition}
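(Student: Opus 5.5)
The statement is informal, so the first step is to give a precise reading that is both meaningful and true. Strict monotonicity with respect to $\prec$ is not impossible by itself: on $\mathcal{Y}\subseteq\mathbb{R}^M$ the map $f(y)=\sum_k y_k$ is strictly increasing along $\prec$. The claim should therefore be read as ruling out a scalar ranking that is \emph{simultaneously} order-preserving and faithful to every coordinate independently. I will formalize this faithfulness as follows: for all $y_i,y_j\in\mathcal{Y}$ and every coordinate $k$,
\[
f(y_i)<f(y_j)\iff y_{i,k}<y_{j,k}.
\]
Equivalently, the scalar order induced by $f$ must agree with the order of each individual target. I will also assume, as is natural for gaze labels, that $\mathcal{Y}$ contains at least two points whose first two coordinates change in opposite directions, such as a product of nondegenerate intervals.

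Under this reading the proof is a short contradiction argument. Pick $y_i,y_j\in\mathcal{Y}$ with $y_{j,1}>y_{i,1}$ and $y_{j,2}<y_{i,2}$, which is the conflicting-change case $\Delta y_{\cdot,1}>0$, $\Delta y_{\cdot,2}<0$ from the statement. Faithfulness in coordinate $1$ forces $f(y_i)<f(y_j)$. Faithfulness in coordinate $2$ forces $f(y_j)<f(y_i)$. Both cannot hold, so no such $f$ exists.

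For the ``rank ambiguity'' part I will argue slightly more generally, to match the wording ``indeterminate'':

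\begin{itemize}
\item Any scalar $f$ that is strictly monotone with respect to $\prec$ (and is not merely the identity in a single coordinate) can be paired with a second strictly monotone scalar $g$ that orders the same pair $(y_i,y_j)$ oppositely. A concrete choice is $f=y_1+\lambda y_2$ and $g=y_1+\mu y_2$ with $\lambda,\mu>0$ picked on opposite sides of the ratio $-\Delta y_1/\Delta y_2$.
\item Hence the order of an incomparable pair is not determined by $\prec$ or by the label geometry; it is an arbitrary artifact of the chosen scalarization.
\item Therefore a spectral rank $\mathcal{R}(z)$, which is a single real number per sample, must resolve incomparable pairs in a way the true label space does not dictate.
\item It must also disagree with at least one coordinate's ordering on that pair, by the contradiction argument above.
\end{itemize}

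The main obstacle is not technical but definitional. ``Preserving the independence of dimensions'' has to be pinned down so that the claim is neither false (the sum counterexample) nor vacuous. I will state the formalization explicitly before the argument. I will also note, as a remark, that a single scalar cannot encode the product order faithfully: comparability of real numbers is total, whereas $\prec$ has incomparable pairs whenever $M\ge 2$. This observation motivates ranking within each disentangled subspace separately instead.
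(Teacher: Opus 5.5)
Your proof is correct and follows the paper's own argument. The appendix also takes a pair $y_B=y_A+(\delta_1,-\delta_2)$ and checks three cases: $\Delta\mathcal{R}>0$ contradicts the second coordinate, $\Delta\mathcal{R}<0$ contradicts the first, and $\Delta\mathcal{R}=0$ collapses distinct samples. Your single biconditional rules out all three at once.

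Your version is exact, whereas the paper assumes differentiability and uses a first-order total-differential approximation. It also states two things the paper leaves implicit. First, $\partial rk/\partial y_k>0$ alone is satisfiable (e.g.\ by $\sum_k y_k$), so the real contradiction is with per-coordinate faithfulness. Second, $\mathcal{Y}$ must actually contain a conflicting pair.
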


Detailed analysis is presented in \cref{TA2}. We demonstrate that this issue can be effectively addressed by constructing independent ranking orders for each target dimension. Motivated by this observation, we propose a novel DSCL framework. Specifically, we first leverage disentangled learning with Jacobian regularization to decouple latent representations into distinct subspaces. Subsequently, semi-supervised contrastive learning is conducted within each subspace to learn robust feature representations corresponding to individual gaze components. Further technical details are elaborated in subsequent sections.

\begin{figure*}
    \centering
    \includegraphics[width=\linewidth]{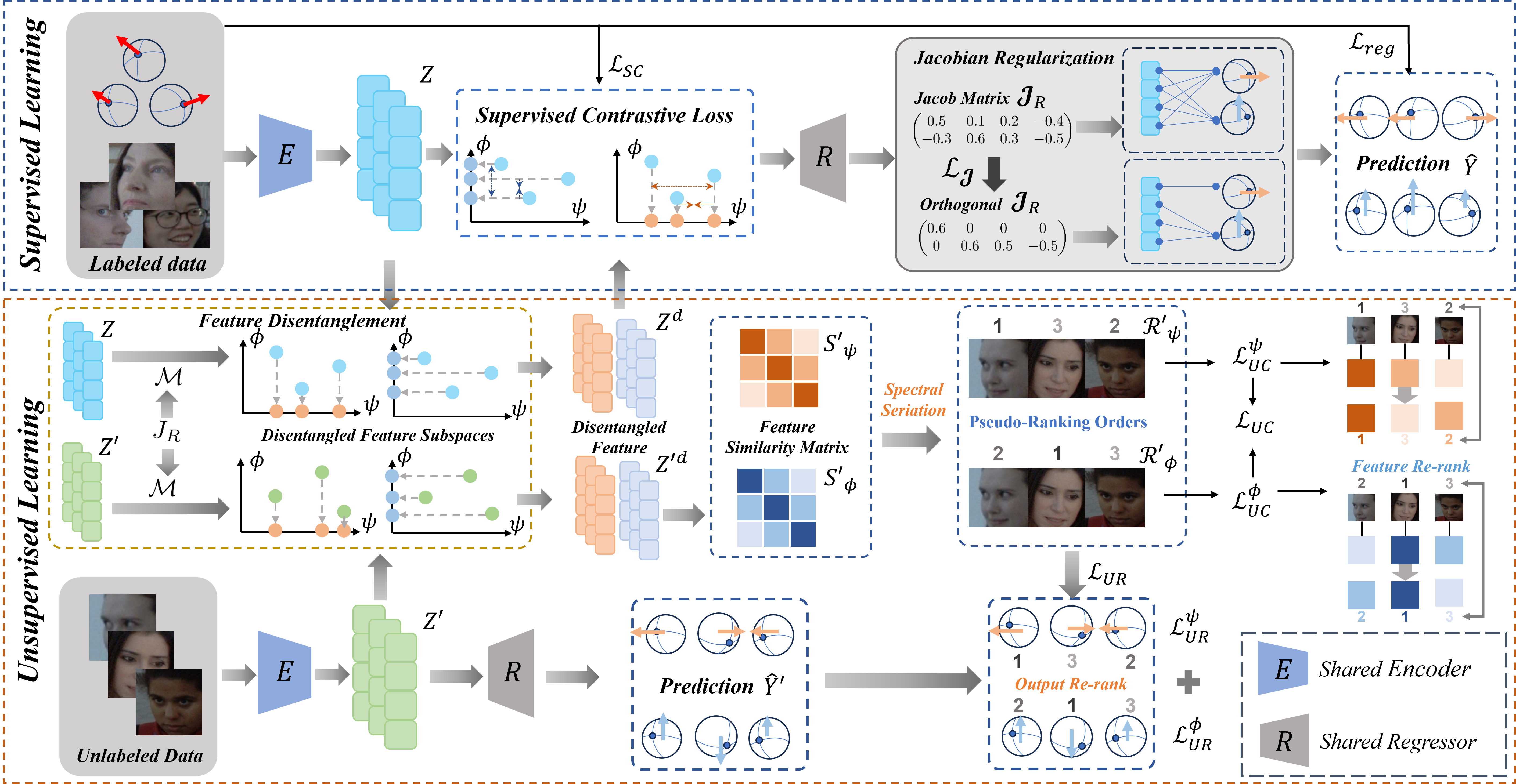}
    \caption{Overview of the Disentangled Subspace Contrastive Learning (DSCL) Framework. DSCL first disentangles the gaze representation $Z$ for specific gaze component (i.e., pitch $\phi$ and yaw $\psi$ angles) regression with Jacobian regularization $\mathcal{L}_{\pmb{J}}$ on labeled samples, and then performs unsupervised contrastive learning on each disentangled subspace with unlabeled data.}
    \label{fig2}
\end{figure*}
\subsection{Disentangling Independent Representation}
The proposed DSCL first explores the disentangled representation learning with labeled data. Suppose $Z\in\mathbb{R}^{B \times N}$ is the feature representation extracted from the encoder for samples with the batchsize of $B$ according to \cref{eq1}, where $N$ denotes its dimension, $\hat{Y}\in\mathbb{R}^{B \times M}$ is predicted gaze vector, the corresponding Jacobian matrix $\pmb{J}$ is expressed as:
\begin{equation}
	\pmb{J}_R(Z) =
	\begin{pmatrix}
		\frac{\partial R^1}{\partial Z^1} & \frac{\partial R^1}{\partial Z^2} & \cdots & \frac{\partial R^1}{\partial Z^N} \\
		\frac{\partial R^2}{\partial Z^1} & \frac{\partial R^2}{\partial Z^2} & \cdots & \frac{\partial R^2}{\partial Z^N} \\
		\cdots & \cdots & \cdots & \cdots \\
		\frac{\partial R^M}{\partial Z^1} & \frac{\partial R^M}{\partial Z^2} & \cdots & \frac{\partial R^M}{\partial Z^N} \\
	\end{pmatrix},
	\label{eq5-1}
\end{equation}
$R^m$ represents the regressor that maps the input $Z$ to the $m$-th gaze component output $\hat{Y}^m$. Each Jacobian element $\pmb{J}_{m,n} = \partial R^m / \partial Z^n$ denotes the sensitivity of the $m$-th output component to a change in the $n$-th feature dimension. In most practice, $M$ is set to 2 for regressing the Euler angles, i.e., pitch and yaw. It is obvious that each feature dimension typically relates to all the gaze components, leading to an entangled gaze representation.

To encourage disentangled representation learning, we constrain the Jacobian vectors across different dimensions to be orthogonal, thereby deriving a regularization loss,
\begin{equation}
	\mathcal{L}_{\pmb{J}} = \sum_{m=1}^M \sum_{z=1}^{M} \sum_{n=1}^{N} \mathbbm{1}_{m \neq z} |\pmb{J}_{m,n} \cdot \pmb{J}_{z,n}|,
    \label{eq5}
\end{equation}
where $\mathbbm{1}$ is the indicator function, equal to 1 if $m \neq z$ and 0 otherwise. This regularization enables each row of the Jacobian matrix to be mutually orthogonal, which results in a matrix where each column has only one non-zero element, meaning each feature dimension corresponds to only one gaze component.

\begin{theorem}
    Let $\pmb{J}$ be the Jacobian of the regressor. If $\pmb{J}$ satisfies the column-sparsity condition induced by minimizing $\mathcal{L}_{\pmb{J}}$ (i.e., for any column $n$, $\pmb{J}_{m,n} \cdot \pmb{J}_{k,n} = 0, \forall m \neq k$), then the feature space $\mathcal{Z} \cong \mathbb{R}^N$ decomposes into orthogonal subspaces:
\begin{equation}
\mathcal{Z} = \mathcal{Z}^1 \oplus \mathcal{Z}^2 \oplus \dots \oplus \mathcal{Z}^M,
\end{equation}
where each subspace $\mathcal{Z}^m$ exclusively controls the $m$-th target $\hat{y}_m$. That is, for any perturbation $\boldsymbol{\delta} \in \mathcal{Z}^m$, the change in output satisfies $\partial \hat{y}_k = 0$ for all $k \neq m$.
\end{theorem}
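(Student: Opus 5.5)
The plan is to read off the decomposition directly from the column supports of $\pmb{J}$. The regressor $R(\cdot,\beta)$ of \cref{eq1} is linear, so its Jacobian $\pmb{J}$ is a constant $M\times N$ matrix. The first step uses the column-sparsity hypothesis: for each column $n$, $\pmb{J}_{m,n}\pmb{J}_{k,n}=0$ for all $m\neq k$ forces at most one nonzero entry in that column.

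The second step builds the partition of feature coordinates. For $m\in\{1,\dots,M\}$, let
\begin{equation*}
S_m=\{n:\pmb{J}_{m,n}\neq 0\},
\end{equation*}
and let $S_0$ be the set of zero columns. By the first step, $S_0,S_1,\dots,S_M$ are pairwise disjoint and cover $\{1,\dots,N\}$. Absorb $S_0$ into one of them, say $S_1$; these directions affect no output, so the choice is harmless and I will remark on it. Then set
\begin{equation*}
\mathcal{Z}^m=\operatorname{span}\{e_n:n\in S_m\}.
\end{equation*}
Since these spans use disjoint sets of standard basis vectors, they are mutually orthogonal under the Euclidean inner product and their direct sum is $\mathbb{R}^N$. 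This gives $\mathcal{Z}=\mathcal{Z}^1\oplus\cdots\oplus\mathcal{Z}^M$.

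The third step checks the exclusivity claim. Take a perturbation $\boldsymbol\delta=\sum_{n\in S_m}\delta_n e_n\in\mathcal{Z}^m$. The change in the $k$-th output is
\begin{equation*}
\partial\hat y_k=\sum_n \pmb{J}_{k,n}\delta_n=\sum_{n\in S_m}\pmb{J}_{k,n}\delta_n .
\end{equation*}
For $k\neq m$ and $n\in S_m$, the entry $\pmb{J}_{m,n}$ is nonzero, so column sparsity gives $\pmb{J}_{k,n}=0$. Hence $\partial\hat y_k=0$. The same holds for $n\in S_0$, where the whole column vanishes. As a complementary remark, $\mathcal{Z}^m$ carries all of the $m$-th row's support, so it controls $\hat y_m$, and every other subspace has zero effect on $\hat y_m$.

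The main obstacle is the case where $R$ is nonlinear, for instance a small MLP head. Then $\pmb{J}$ depends on $Z$, and the supports $S_m(Z)$ could change from point to point, so a single global decomposition may not exist. My first approach is to state the result for the linear regressor assumed in \cref{eq1}. For a nonlinear head, I would state it pointwise: it holds for infinitesimal perturbations at each $Z$, using tangent-space subspaces $\mathcal{Z}^m(Z)$. If the hypothesis holds with supports constant on a connected domain, the decomposition is global, and the mean value theorem along coordinate segments extends the zero-change conclusion to finite perturbations.
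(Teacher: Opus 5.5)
Your proposal is correct and follows essentially the same route as the paper: the column-sparsity condition makes the row supports $\operatorname{supp}(\pmb{j}_m)$ of $\pmb{J}$ pairwise disjoint, and these index sets define coordinate subspaces $\mathcal{Z}^m$ whose direct sum is the feature space. Your version is more complete than the paper's proof, which stops after asserting that the supports partition $\{1,\dots,N\}$. You handle zero columns, which that partition claim overlooks; you explicitly verify $\partial\hat y_k=0$ for $\boldsymbol{\delta}\in\mathcal{Z}^m$, which the paper never checks; and you flag the linear-regressor versus MLP-head issue.
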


\begin{proof}
Let $\pmb{j}_m \in \mathbb{R}^N$ denote the $m$-th row vector of the Jacobian $\pmb{J}$, representing the gradient of the $m$-th target with respect to the features. The regularization in \cref{eq5} enforces the condition for $\forall m \neq k$:
\begin{equation}
\sum_{n=1}^N |\pmb{J}_{m,n} \cdot \pmb{J}_{k,n}| = 0 \implies
\text{supp}(\mathbf{j}_m) \cap \text{supp}(\mathbf{j}_k) = \emptyset,
\end{equation}
where $\text{supp}(\mathbf{v}) = \{i \mid v_i \neq 0\}$ is the support set of a vector.
Since the supports are disjoint, the set of feature indices $\{1, \dots, N\}$ can be partitioned into subsets $S_1, S_2, \dots, S_M$, where $S_m = \text{supp}(\mathbf{j}_m)$. The disjoint property implies $S_i \cap S_j = \emptyset$ for $i \neq j$.
\end{proof}
By applying Jacobian regularization on labeled data, the model tends to form a structured representation(i.e., disjoint latent subspaces), so that we can explicitly identify individual feature dimension for the specific gaze component by constructing a binary matrix $\mathcal{M} \in \mathbb{R}^{M\times N}$, note that only the $0$ and $1$ elements are contained in $\mathcal{M}$, where each row of $\mathcal{M}$ is used to isolate the feature dimension that primarily contributes to the specific gaze component. To build this binary matrix, we assign each feature dimension exclusively to the component corresponding to the largest absolute value. Formally, $\mathcal{M}$ is defined as:
\begin{equation}
	\mathcal{M}_{m,n} = \begin{cases}
		1, & \text{if } m = \underset{z}{\arg\max}|\pmb{J}_{z,n}|,\\
		0, & \text{otherwise}.
	\end{cases}
	\label{eq6}
\end{equation}

Subsequently, we can perform a point-wise multiplication between the binary matrix $\mathcal{M}$ and the feature matrix $Z$ to obtain the disentangled gaze representation for each sample,
\begin{equation}
	Z^d = Z \odot \mathcal{M}^T,
\end{equation}
where $Z^d \in \mathbb{R}^{B\times N \times M}$ denotes the disentangled feature matrix. We then split the $Z^d$ along with the batch axis to acquire a single disentangled feature vector $z^d \in N\times M$ for each sample. Note that for the $m$-th column in $z^d_i$ denotes the disentangled feature for $m$-th component corresponding to gaze label $y_i$.

\begin{remark}
A potential concern regarding the binary matrix $\mathcal{M}$ is the non-differentiability of the $\arg\max(\cdot)$ operator, which blocks gradient propagation through the mask generation path. However, this design is intentional and effective:
\begin{itemize}
    \item Forward Propagation: The matrix $\mathcal{M}$ acts as a gate, selecting specific feature dimensions for specific feature subspace ($Z^d = Z \odot \mathcal{M}^T$).
    \item Backward Propagation: During backpropagation, $\mathcal{M}$ functions as a constant filter. Gradients from the regression loss flow only through the selected active indices ($\mathcal{M}_{m,n}=1$). This means the regression loss focuses solely on refining the magnitude of the features within their assigned subspaces.
\end{itemize}
\end{remark}

\subsection{Subspace Contrastive Learning}
After disentangling gaze representation for a specific gaze component, we further construct a feature similarity matrix on $Z^d$. Here, we default that $Z^d$ is normalized for simplicity. We can obtain the feature distance in the disentangled subspace for each specific gaze component,
\begin{equation}
	D^m_z(z^d_i, z^d_j) = ||z^d_{i,m} - z^d_{j,m}||_2,
\end{equation}
where $D^m_z$ denotes the pair-wise feature distance within the subspace corresponding to the $m$-th gaze component. Similarly, we further apply the binary matrix $\mathcal{M}$, derived from the labeled data, to the feature representation extracted by the unlabeled data to enforce the consistent feature disentangling,
\begin{equation}
	D^m_z({z'}^d_i, {z'}^d_j) = ||{z'}^d_{i,m} - {z'}^d_{j,m}||_2.
\end{equation}

Therefore, we can obtain the feature similarity matrix set corresponding to each gaze component within a batch of labeled or unlabeled data, i.e., $\{\mathcal{S}_1, ..., \mathcal{S}_M\}$ and $\{\mathcal{S}'_1, ..., \mathcal{S}'_M\}$, and
\begin{equation}
	\begin{aligned}
		\mathcal{S}_m[i,j] &= D^m_z(z^d_{i}, z^d_{j}), \\
		\mathcal{S'}_m[i,j] &= D^m_z({z'}^d_{i}, {z'}^d_{j}).
	\end{aligned}
\end{equation}

After obtaining the set of feature similarity matrices, we then perform contrastive learning on each disentangled subspace. Under a semi-supervised learning setting, we implement supervised and unsupervised contrastive learning on labeled and unlabeled data simultaneously. By applying supervised contrastive loss $\mathcal{L}_{SC}$, we force the feature distances within each subspace to reflect the label discrepancies for a specific gaze component. Meanwhile, for unsupervised contrastive learning, we used spectral seriation \cite{atkins1998spectral} to generate a ranking order for each component, which provides the implicit supervision signal for unlabeled data.

Following \cite{dai2023semi}, we then exploit spectral seriation on the feature similarity matrix in each disentangled subspace, which allows us to derive ranking orders as the pseudo supervision signal. The seriation problem within the subspace can be formulated as:
\begin{equation}
	\underset{\mathcal{R'}_m}{\arg\min}\mathcal{S'}_m[i, j](\mathcal{R'}_m[i], \mathcal{R'}_m[j])^2,
\end{equation}
$\mathcal{R}'_m \in \mathbb{Z}^{B}$ is the ordinal ranking vector for samples in the unlabeled batch corresponding to the $m$-th component.

Consequently, the ranking order for each specific component is used to supervise contrastive learning on unlabeled data, and leads to an unsupervised contrastive learning loss $\mathcal{L}_{\textit{UC}}$,
\begin{equation}
	\begin{aligned}
		\mathcal{L}_{\text{UC}}
		&= \sum_{m=1}^{M} \mathcal{L}_{\text{UC}}^m(\mathcal{S}_m', \mathcal{R}_m') \\
		&= \sum_{m=1}^{M} \sum_{i=1}^{B} \ell \Big( \operatorname{rk}(\mathcal{S}_m'[i,:]), \\
		&\qquad \operatorname{rk}\big(-|\mathcal{R}_m' - \mathcal{R}_m'[i]|\big); \lambda \Big),
	\end{aligned}
\end{equation}
where $\operatorname{rk}(\cdot)$ denotes the ranking operator and $\ell(\cdot, \cdot)$ is the ranking similarity function. $\lambda$ is the hyperparameter for differential ranking solver \cite{poganvcic2019differentiation}. Similarly, the pseudo ranking order is also used to supervise each predicted gaze component $\hat{Y}'_m$, which results in an unsupervised ranking loss $\mathcal{L}_{\textit{UR}}$,
\begin{equation}
	\begin{aligned}
	    \mathcal{L}_{\textit{UR}} = &\sum_{m=1}^M \mathcal{L}^m_{\textit{UR}}(\hat Y'[:,m], \mathcal{R}'_m) \\
		= &\sum_{m=1}^{M} \sum_{i=1}^B \ell(\operatorname{rk}(-|\hat Y'[:,m] - \hat Y'[i,m]|), \\
        &\operatorname{rk}(-|\mathcal{R}'_m - \mathcal{R}'_m[i]|); \lambda). \\
	\end{aligned}
\end{equation}
\subsection{Joint Optimization}
The whole framework combines the supervised regression loss $\mathcal{L}_{\textit{reg}}$, Jacobian regularization loss $\mathcal{L}_{\pmb{J}}$, supervised contrastive loss $\mathcal{L}_{\textit{SC}}$, unsupervised contrastive learning loss $\mathcal{L}_{\textit{UC}}$, and unsupervised ranking loss $\mathcal{L}_{\textit{UR}}$,
\begin{equation}
	\begin{aligned}
		\mathcal{L}_{\textit{Total}} &= \mathcal{L}_{\textit{reg}}(Y,\hat Y) + \gamma \mathcal{L}_{\pmb{J}} + w_{\textit{SC}} \mathcal{L}_{\textit{SC}}\\
		&+ w_{\textit{UC}} \mathcal{L}_{\textit{UC}} + w_{\textit{UR}}\mathcal{L}_{\textit{UR}}
	\end{aligned}
\end{equation}
where $\gamma$, $w_{\textit{UC}}$, $w_{\textit{UR}}$ and $w_{\textit{SC}}$ are hyperparameters that balance the contribution of $\mathcal{L}_{\pmb{J}}$, $\mathcal{L}_{\textit{UC}}$, $\mathcal{L}_{\textit{UR}}$ and $\mathcal{L}_{\textit{SC}}$, respectively. Note that we implement $\mathcal{L}_{\textit{reg}}$ with the $L1$ norm to supervise gaze vector regression on labeled data. We employ supervised contrastive loss $\mathcal{L}_{\textit{SC}}$ as introduced in \cite{zhangimproving}. To further ensure training stability and reliability of the binary matrix $\mathcal{M}$, we first perform initialization on the labeled data using $\mathcal{L}_{\textit{reg}}$ and $\mathcal{L}_{\pmb{J}}$, and then finetune the model with $\mathcal{L}_{\textit{Total}}$ under the semi-supervised condition, which is crucial for establishing a well-structured feature space in which the Jacobian matrix exhibits clear column sparsity.
\begin{table*}[t]
	\caption{MAE comparison with state-of-the-art methods on the testing set of Gaze360.}
	\label{tab1}
    \centering
    \footnotesize
    \setlength{\tabcolsep}{8pt}
    \begin{tabular}{l|c|llll|l}
        \toprule
        Method & \#Params & Full & $20\%$ & $10\%$ & $5\%$ & Avg.\\
        \midrule
        Gazetr & 11.4M & 12.11$^{\circ}$ & 13.67$^{\circ}$ & 16.19$^{\circ}$ & 19.86$^{\circ}$ & 16.57$^{\circ}$\\
        Gazetr + DSCL & 11.4M & -- & 13.55$^{\circ}$ ($\downarrow 0.11^{\circ}$) & {14.36$^{\circ}$} ($\downarrow 1.83^{\circ}$) & {17.53$^{\circ}$} ($\downarrow 2.33^{\circ}$) & 15.14$^{\circ}$ ($\downarrow 1.43^{\circ}$)\\
        \midrule
        CA-Ne & 34.1M & 11.20$^{\circ}$ & -- & -- & -- & --\\
        L2CS-Net & 23.8M & 10.41$^{\circ}$ & 15.33$^{\circ}$ & 18.88$^{\circ}$ & 22.82$^{\circ}$ & 19.01$^{\circ}$ \\
        \midrule
        RT-Gene & 82.0M & 12.26$^{\circ}$ & 13.02$^{\circ}$ & 13.42$^{\circ}$ & 14.58$^{\circ}$  &  13.67$^{\circ}$\\
        RT-Gene + DSCL & 82.0M & -- & 12.83$^{\circ}$($\downarrow 0.19^{\circ}$) & 13.22$^{\circ}$($\downarrow 0.20^{\circ}$) &13.41$^{\circ}$($\downarrow 1.17^{\circ}$) & 13.15$^{\circ}$ ($\downarrow 0.52^{\circ}$)\\
        \midrule
        FullFace & 197M & 12.99$^{\circ}$ & {13.61$^{\circ}$} & {14.91$^{\circ}$} & 16.26$^{\circ}$ & 14.93$^{\circ}$ \\
        FullFace + DSCL & 197M & -- & {13.03$^{\circ}$}($\downarrow 0.58^{\circ}$)  & {14.81$^{\circ}$}($\downarrow 0.10^{\circ}$)  & {15.09$^{\circ}$}($\downarrow 1.17^{\circ}$)  & 14.31$^{\circ}$ ($\downarrow 0.62^{\circ}$) \\
        \midrule
        Dilated-Net & 3.90M & 13.73$^{\circ}$ & 13.83$^{\circ}$ & 15.04$^{\circ}$ & 16.11$^{\circ}$ & 14.99$^{\circ}$\\
        Dilated-Net + DSCL & 3.90M & -- & {13.57$^{\circ}$}($\downarrow 0.26^{\circ}$) & {14.83$^{\circ}$}($\downarrow 0.21^{\circ}$) & {15.55$^{\circ}$}($\downarrow 0.56^{\circ}$) & 14.65$^{\circ}$ ($\downarrow 0.34^{\circ}$)\\
        \midrule
        ST-WSGE & 27.5M & 11.58$^{\circ}$ & 13.71$^{\circ}$ & 16.20$^{\circ}$ & 21.33$^{\circ}$ & 17.08$^{\circ}$\\
        ST-WSGE + DSCL & 27.5M & -- & 13.05$^{\circ}$($\downarrow 0.66^{\circ}$) & 14.21$^{\circ}$($\downarrow 1.99^{\circ}$) & {17.10$^{\circ}$}($\downarrow 4.23^{\circ}$) & 14.79$^{\circ}$ ($\downarrow 2.29^{\circ}$)\\
        \midrule
        Baseline & 11.1M & 13.59$^{\circ}$ & 14.80$^{\circ}$ & 16.76$^{\circ}$ & 23.00$^{\circ}$ & 18.19$^{\circ}$ \\
        Baseline + DSCL & 11.1M & -- & 12.67$^{\circ}$($\downarrow 2.13^{\circ}$) & 13.94$^{\circ}$($\downarrow 2.82^{\circ}$) & 14.21$^{\circ}$($\downarrow 8.79^{\circ}$) & 13.60$^{\circ}$ ($\downarrow 4.59^{\circ}$) \\
        \bottomrule
    \end{tabular}
\end{table*}
\section{Experiments}
\subsection{Experimental Setting}
We select the three public benchmarks (i.e., Gaze360 \cite{kellnhofer2019gaze360}, MPIIGaze \cite{zhang2017mpiigaze}, and EyeDiap \cite{funes2014eyediap}) to evaluate the effectiveness of DSCL for both in-domain and cross-domain conditions. To simulate real data scarcity, we construct the few-label settings by uniformly sampling the Gaze360 dataset at ratios of 20\%, 10\%, and 5\% subsets, and serve as the labeled set, and the remainder of the data serves as the unlabeled set. For cross domain setting, the Gaze360 is used as the source domain, and MPIIGaze and EyeDiap as the target domain. As for in-domain evaluation, we only test on the Gaze360. Our baseline model used ResNet-18 \cite{he2016deep} as encoder, and a regressor with 2-layer MLPs to predict the gaze vector. The Mean Angular Error (MAE) is used as the common evaluation metric.

\subsection{Implementation Details}
All experiments are conducted on a single RTX 4090 GPU. Input images are resized to $224 \times 224$, following the common practice in prior work \cite{kellnhofer2019gaze360}. We used the Adam optimizer with a learning rate of $10^{-4}$ and a batch size of 32. The hyperparameter $\gamma$, $w_{\textit{SC}}$, $w_{\textit{UC}}$, and $w_{\textit{UR}}$ are empirically set to 1.0, 1.0, 0.05, 0.01, respectively, in all experiments. Specifically, $w_{\textit{SC}}$, $w_{\textit{UC}}$ and $w_{\textit{UR}}$ are set as same as \cite{dai2023semi}. For the domain-adaptation task, we train the model in the source domain (i.e., Gaze360) under different semi-supervised settings and perform unsupervised or supervised adaptation, reporting the average MAE over 20 trials using 100 samples randomly selected from the target domain.
\subsection{Experimental Results}
\subsubsection{Comparison with SOTA methods}
\textbf{In-domain Evaluation.} We first compare our DSCL with several representative gaze estimation methods \cite{cheng2022gazetr,cheng2020coarse,abdelrahman2023l2cs,fischer2018rt,zhang2017s,chen2018appearance,vuillecard2025enhancing} under the in-domain evaluation setting, with Gaze360 adopted as the benchmark dataset. Quantitative results are reported in \cref{tab1}. A clear trend can be observed that the performance of all methods heavily depends on the diversity and scale of annotated training data. Even ST-WSGE \cite{vuillecard2025enhancing}, which leverages weak supervision with additional unlabeled data, suffers from drastic performance degradation under limited labeled samples. Similarly, our baseline model exhibits the same tendency, with its MAE rising from 13.59$^{\circ}$ to 23.00$^{\circ}$. In contrast, when equipped with the proposed DSCL framework, the baseline achieves remarkable performance using only a small fraction of labeled data. It yields accuracy comparable to full-supervised training while introducing no extra computational overhead. Furthermore, embedding DSCL into existing baseline methods also brings consistent and stable performance gains across different settings.

\begin{table*}[t]
	\centering
	\footnotesize
	\caption{MAE comparison with the state-of-the-art methods for cross-domain evaluations. The $\mathcal{D}_G \rightarrow \mathcal{D}_M$ and $\mathcal{D}_G \rightarrow \mathcal{D}_E$ denote using Gaze360 as source domain for training, MPIIGaze and EyeDiap are viewed as target domains for evaluation, respectively. The baselines for Domain Generalization (DG) and Supervised Domain Adaptation (SDA) are indicated by $\dagger$ and $\ddagger$, respectively.}
	\label{tab2}
	\setlength{\tabcolsep}{0.8pt}
	\begin{tabular}{l|clll|clll}
		\toprule
		\multirow{2}{*}{Method} & \multicolumn{4}{c|}{$\mathcal{D}_G \rightarrow \mathcal{D}_M$} & \multicolumn{4}{c}{$\mathcal{D}_G \rightarrow \mathcal{D}_E$} \\
		  & Full & $20\%$ & $10\%$ & $5\%$ & Full &$20\%$ & $10\%$ & $5\%$ \\
		\midrule
		PureGaze & 9.23$^{\circ}$ & 11.02$^{\circ}$ & 14.68$^{\circ}$ & 16.10$^{\circ}$ & 9.32$^{\circ}$ & 16.05$^{\circ}$ & 18.61$^{\circ}$ & 20.82$^{\circ}$ \\
		PureGaze+DSCL & -- & 9.50$^{\circ}$ ($\downarrow 1.52^{\circ}$) & 10.57$^{\circ}$ ($\downarrow 4.11^{\circ}$) & 12.21$^{\circ}$ ($\downarrow 3.89^{\circ}$) & - & 9.32$^{\circ}$ ($\downarrow 6.73^{\circ}$) & 9.59$^{\circ}$ ($\downarrow 9.02^{\circ}$) & 14.40$^{\circ}$ ($\downarrow 6.42^{\circ}$) \\
		\midrule
		Gazecon & 8.52$^{\circ}$ & -- & -- & -- & 7.82$^{\circ}$ & -- & -- & -- \\
		CDG & 7.03$^{\circ}$ & -- & -- & -- & 7.27$^{\circ}$ & -- & -- & -- \\
		AGG & 7.87$^{\circ}$ & -- & -- & --& 7.93$^{\circ}$ & -- & -- & -- \\
		\midrule
		Baseline$\dagger$ &11.74$^{\circ}$ & 10.21$^{\circ}$ & 12.01$^{\circ}$ & 14.01$^{\circ}$ & 11.62$^{\circ}$ & 13.37$^{\circ}$ & 14.95$^{\circ}$ & 18.75$^{\circ}$ \\
		Baseline$\dagger$+DSCL & -- & 9.70$^{\circ}$($\downarrow 0.51^{\circ}$) & 9.89$^{\circ}$($\downarrow 2.12^{\circ}$) & 11.68$^{\circ}$ ($\downarrow 2.33^{\circ}$) & - & 9.01$^{\circ}$ ($\downarrow 4.36^{\circ}$) & 9.96$^{\circ}$ ($\downarrow 4.99^{\circ}$) & 10.21$^{\circ}$ ($\downarrow 8.54^{\circ}$) \\
		\midrule
		PnP-GA & 6.32$^{\circ}$ & 7.99$^{\circ}$ & 8.42$^{\circ}$ & 9.85$^{\circ}$ & 6.85$^{\circ}$&8.54$^{\circ}$ & 8.95$^{\circ}$ & 11.78$^{\circ}$ \\
		PnP-GA+DSCL & -- & 6.16$^{\circ}$ ($\downarrow 1.83^{\circ}$) & 6.55$^{\circ}$ ($\downarrow 1.87^{\circ}$) & 6.62$^{\circ}$ ($\downarrow 3.23^{\circ}$) & -- &6.76$^{\circ}$ ($\downarrow 1.78^{\circ}$) & 6.93$^{\circ}$ ($\downarrow 2.02^{\circ}$) & 7.58$^{\circ}$ ($\downarrow 4.20^{\circ}$) \\
		\midrule
		UnReGA & 5.80$^{\circ}$ & -- & -- & -- & 5.42$^{\circ}$ & -- & -- & -- \\
		UnReGA$^{-}$ & 6.21$^{\circ}$ &6.60$^{\circ}$ & 8.31$^{\circ}$ & 9.55$^{\circ}$ & 6.40$^{\circ}$ & 8.33$^{\circ}$ & 10.11$^{\circ}$ & 14.69$^{\circ}$ \\
		UnReGA$^{-}$+DSCL & -- & 6.18$^{\circ}$ ($\downarrow 0.42^{\circ}$) & 7.18$^{\circ}$ ($\downarrow 1.13^{\circ}$) & 9.22$^{\circ}$ ($\downarrow 0.33^{\circ}$) & -- & 6.91$^{\circ}$ ($\downarrow 1.42^{\circ}$) & 8.80$^{\circ}$ ($\downarrow 1.31^{\circ}$) & 11.52$^{\circ}$ ($\downarrow 3.17^{\circ}$) \\
		\midrule
		CRGA & 5.89$^{\circ}$ & -- & -- & -- & 6.49$^{\circ}$ & -- & -- & -- \\
		\midrule
		Baseline$\ddagger$ & 6.28$^{\circ}$ & 7.47$^{\circ}$ & 7.17$^{\circ}$ & 9.02$^{\circ}$ & 7.26$^{\circ}$ & 9.37$^{\circ}$ & 10.09$^{\circ}$ & 10.66$^{\circ}$ \\
		Baseline$\ddagger$+DSCL & -- & 6.63$^{\circ}$ ($\downarrow 0.84^{\circ}$) & 6.95$^{\circ}$ ($\downarrow 0.22^{\circ}$) & 7.22$^{\circ}$ ($\downarrow 1.80^{\circ}$) & -- & 8.15$^{\circ}$ ($\downarrow 1.22^{\circ}$) & 8.23$^{\circ}$ ($\downarrow 1.86^{\circ}$) & 8.64$^{\circ}$ ($\downarrow 2.02^{\circ}$) \\
		\bottomrule
	\end{tabular}
\end{table*}

\textbf{Cross-domain Evaluation.} We further conduct comprehensive comparisons under cross-domain settings. We select representative methods covering domain generalization (DG) \cite{cheng2022puregaze,xu2023learning,wang2022contrastive,bao2024feature}, unsupervised domain adaptation (UDA) \cite{liu2021generalizing,cai2023source,wang2022contrastive}, and supervised domain adaptation (SDA). Quantitative results are presented in \cref{tab2}. For DG-based methods, we observe that all approaches still yield high MAE values even when trained with fully labeled samples. By contrast, integrating our DSCL into the vanilla baseline enables our method to use merely 5\% labeled data, while achieving slightly better performance than the baseline trained on full labeled data. Furthermore, embedding DSCL into PureGaze also leads to substantial performance improvements. When extended to UDA and SDA methods, DSCL still achieves consistent and notable gains. These results demonstrate that our DSCL is both effective and efficient for learning generalizable gaze representations.

\begin{table}[t]
	\centering
	\footnotesize
	\caption{The ablation results of each loss term on the testing set of Gaze360 with different labeling rates.}
	\label{tab3}
	\setlength{\tabcolsep}{7pt}
	\begin{tabular}{l|ccc|c}
		\toprule
		Variant & $20\%$& $10\%$ & $5\%$& Avg. \\
		\midrule
		w/ $\mathcal{L}_{\textit{reg}}$ & 14.80$^{\circ}$ & 16.76$^{\circ}$ & 23.00$^{\circ}$ & 18.19$^{\circ}$ \\
		\midrule
		w/ $\mathcal{L}_{\textit{Total}}$ (DSCL) & \textbf{12.67$^{\circ}$} & \textbf{13.94$^{\circ}$} & \textbf{14.21$^{\circ}$} & \textbf{13.61$^{\circ}$} \\
		\midrule
		w/o $\mathcal{L}_{\textit{SC}}$ & 13.65$^{\circ}$ & 14.94$^{\circ}$ & 15.40$^{\circ}$ & 14.66$^{\circ}$ \\
		w/o $\mathcal{L}_{\textit{UR}}$ & 14.29$^{\circ}$ & 14.02$^{\circ}$ & 18.88$^{\circ}$ & 15.73$^{\circ}$ \\
		w/o $\mathcal{L}_{\textit{UC}}$ & 13.14$^{\circ}$ & 14.32$^{\circ}$ & 15.97$^{\circ}$ & 14.48$^{\circ}$ \\
		w/o $\mathcal{L}_{\boldmath{\pmb{J}}}$ & 13.68$^{\circ}$ & 14.81$^{\circ}$ & 17.66$^{\circ}$ & 15.38$^{\circ}$ \\
		w/o $\mathcal{L}_{\textit{UR}} + \mathcal{L}_{\textit{UC}}$ & 15.97$^{\circ}$ & 15.37$^{\circ}$ & 21.72$^{\circ}$ & 17.69$^{\circ}$
        \\
        w/o Init & 14.42$^{\circ}$ & 19.96$^{\circ}$ & 21.32$^{\circ}$ & 18.57$^{\circ}$\\
		\bottomrule
	\end{tabular}	
\end{table}
\subsubsection{Ablation Study}
\textbf{Loss Functions.} Since the proposed DSCL introduces no additional trainable parameters, we conduct ablation experiments solely on the Gaze360 dataset to analyze the contribution of each loss term. Quantitative results are listed in \cref{tab3}. We first train the baseline model in a supervised manner under the partially labeled data setting, denoted as $w/ \mathcal{L}_{reg}$. Compared with the full DSCL framework, the baseline exhibits poor generalization on the test set. We then gradually remove individual loss components from $\mathcal{L}_{\textit{Total}}$, and each ablated variant suffers from noticeable performance degradation to varying degrees. In particular, removing unlabeled data from the training set, i.e., w/o $\mathcal{L}_{\textit{UR}}$ and w/o $\mathcal{L}_{\textit{UC}}$, causes the most dramatic performance drop. This strongly verifies that DSCL effectively learns more generalizable gaze representations with the abundant unlabeled data. Additionally, eliminating the initial training phase (i.e., w/o Init) leads to a substantial drop in performance. This confirms that the supervised pretraining stage is essential for constructing a reliable and consistent binary matrix $\mathcal{M}$ that serves as the fundamental basis of our DSCL.

\begin{table}[t]
	\caption{Comparison with the state-of-the-art semi-supervised learning methods on the Gaze360. The best and second-best results are \textbf{bolded} and \underline{underlined}, respectively.}
	\label{tab4}
	\centering
	\footnotesize
	\setlength{\tabcolsep}{6pt}
	\begin{tabular}{l|cccc|c}
		\toprule
		Method  & Full & $20\%$ & $10\%$ & $5\%$ & Avg.\\
		\midrule
		\textit{Sup.} & 13.59$^{\circ}$ & \underline{14.80$^{\circ}$} & \underline{16.76$^{\circ}$} & 23.00$^{\circ}$ & 18.19$^{\circ}$ \\
		\midrule
		UCVME & - & 16.89$^{\circ}$ & 19.62$^{\circ}$ & 21.49$^{\circ}$ & 19.33$^{\circ}$ \\
		Rankup & - & 16.32$^{\circ}$ & 18.70$^{\circ}$ & 24.95$^{\circ}$  & 19.99$^{\circ}$ \\
		CLSS & - & 14.88$^{\circ}$ & 17.31$^{\circ}$ & \underline{20.77$^{\circ}$} & \underline{17.65$^{\circ}$} \\
		\midrule
		\textbf{DSCL} & - & \textbf{12.67$^{\circ}$} & \textbf{13.94$^{\circ}$} & \textbf{14.21$^{\circ}$} & \textbf{13.60$^{\circ}$} \\
		\bottomrule
	\end{tabular}
\end{table}
\begin{table}[t]
	\centering
	\caption{Scalability of DSCL with external unlabeled facial data. We report MAE under in-domain evaluation on Gaze360 and cross-domain evaluations from Gaze360 to EyeDiap ($\mathcal{D}_G \rightarrow \mathcal{D}E$) and MPIIGaze ($\mathcal{D}_G \rightarrow \mathcal{D}_M$). ``Baseline'' denotes using no external unlabeled data. The best and second-best results are highlighted with \textbf{bold} and \underline{underline}, respectively.}
	\label{tab5}
	\footnotesize
	\setlength{\tabcolsep}{7.5pt}
	\begin{tabular}{l|cccc}
		\toprule
		External data & Setting & 20\% & 10\% & 5\% \\  
		\midrule
		\multirow{3}{*}{Baseline} & Gaze360 & 12.67$^{\circ}$ & 13.94$^{\circ}$ & 14.21$^{\circ}$ \\
        & $\mathcal{D}_G \rightarrow \mathcal{D}_E$ & \textbf{9.01}$^{\circ}$ & 9.96$^{\circ}$ & \underline{10.21}$^{\circ}$ \\
        & $\mathcal{D}_G \rightarrow \mathcal{D}_M$ & 9.70$^{\circ}$ & 10.57$^{\circ}$ & 11.68$^{\circ}$ \\
        \midrule 
        \multirow{3}{*}{+WebFace} & Gaze360 & \underline{12.16}$^{\circ}$ & \underline{13.48}$^{\circ}$ & \underline{14.26}$^{\circ}$ \\
        & $\mathcal{D}_G \rightarrow \mathcal{D}_E$ & \underline{9.17}$^{\circ}$ & \underline{9.34}$^{\circ}$ & 11.23$^{\circ}$ \\
        & $\mathcal{D}_G \rightarrow \mathcal{D}_M$ & \underline{9.01}$^{\circ}$ & \underline{10.29}$^{\circ}$ & \underline{10.75}$^{\circ}$ \\
        \midrule
        \multirow{3}{*}{\shortstack{+WebFace \\ \& CelebA}} & Gaze360 & \textbf{11.76}$^{\circ}$ & \textbf{12.55}$^{\circ}$ & \textbf{13.85}$^{\circ}$ \\
        & $\mathcal{D}_G \rightarrow \mathcal{D}_E$ & \underline{9.03}$^{\circ}$ & \textbf{9.24}$^{\circ}$ & \textbf{10.09}$^{\circ}$ \\
        & $\mathcal{D}_G \rightarrow \mathcal{D}_M$ & \textbf{8.97}$^{\circ}$ & \textbf{9.15}$^{\circ}$ & \textbf{10.65}$^{\circ}$ \\
		\bottomrule
	\end{tabular}	
\end{table}

\textbf{Comparison with SOTA SSL Methods.} We further compare our DSCL with several representative semi-supervised learning frameworks to validate its effectiveness, e.g., UCVME \cite{dai2023ucvme}, Rankup \cite{huang2024rankup}, and CLSS \cite{dai2023semi}. All experiments are conducted on the Gaze360 dataset under in-domain settings, and quantitative results are reported in \cref{tab4}. For fair comparison, all the methods adopt our baseline architecture. The baseline is first trained with the 100\%, 20\%, 10\%, and 5\% training sets and then tests its performance, denoted by $\textit{Sup}$. Furthermore, we evaluate each approach to analyze their effects. It is clear that none of them achieved any improvement under semi-supervised settings, and, in turn, led to worse results, except for our DSCL. Specifically, compared to CLSS, our DSCL achieved significant performance gains, providing strong evidence for the existence of ambiguity in SCL. 

\textbf{Scaling with External Unlabeled Data.} To examine whether DSCL can benefit from external unlabeled data, we conduct an additional experiment by augmenting the unlabeled training pool with large-scale facial datasets. However, directly using external facial data may introduce a severe distribution mismatch due to differences in collection conditions and gaze distributions \cite{oliver2018realistic}. To better evaluate the scalability of DSCL while reducing the confounding effects caused by such a mismatch, we construct a mixed unlabeled pool by combining Gaze360 with external facial datasets at a ratio of $\sim 3:1$, which leads to the original data distribution remaining dominant.

Table~\ref{tab5} reports the results when progressively adding WebFace \cite{yi2014learning} and CelebA \cite{liu2015deep} into the unlabeled sample pool. Compared with the baseline without external data, adding WebFace already improves most evaluation settings, especially under the cross-domain setting $\mathcal{D}_G \rightarrow \mathcal{D}_M$. When further incorporating CelebA, DSCL achieves the best performance across all labeling rates and all evaluation settings. In particular, the MAE drops from 10.57$^{\circ}$ to 9.15$^{\circ}$ under $\mathcal{D}_G \rightarrow \mathcal{D}_M$ 10\% setting, implying that more diverse external unlabeled data can further improve both label efficiency and cross-domain generalization. These results demonstrate that DSCL can leverage the increased diversity of external facial data to learn more generalizable gaze representations and improve cross-domain robustness.

\textbf{Disentangling 3D Gaze Components.} Empirically, we adopt Euler angles as gaze labels, which consist of two components. We further extend the DSCL to 3D gaze vector prediction by setting \(M=3\) to explore its applicability. All experiments are conducted on the Gaze360, with quantitative results summarized in \cref{tab6}, where we also include CLSS for comparison. Our DSCL achieves superior performance under semi-supervised settings. In contrast, CLSS suffers from substantial performance drops. This indicates that three-dimensional regression introduces more severe ambiguity in unsupervised contrastive learning, which greatly restricts its representation learning capability.

\begin{table}[t]
	\caption{Ablation study for disentangling more gaze components.}
	\label{tab6}
	\centering
	\footnotesize
	\setlength{\tabcolsep}{3.8pt}
	\begin{tabular}{l|c|ccc|c}
		\toprule
		Method & Label & $20\%$ & $10\%$ & $5\%$ & Avg. \\
		\midrule
		\multirow{2}{*}{CLSS} & $M=2$ & 14.88$^{\circ}$ & 17.31$^{\circ}$ & 20.77$^{\circ}$ & 17.65$^{\circ}$ \\
		& $M=3$ & 15.53$^{\circ}$ & 18.44$^{\circ}$ & 22.88$^{\circ}$ & 18.95$^{\circ}$ (+1.30$^{\circ}$) \\
		\midrule
		\multirow{2}{*}{\textbf{DSCL} } & $M=2$ & 12.67$^{\circ}$ & 13.94$^{\circ}$ & 14.21$^{\circ}$ & 13.61$^{\circ}$ \\
		& $M=3$ & 12.29$^{\circ}$ & 13.32$^{\circ}$ & 14.86$^{\circ}$ & 13.49$^{\circ}$ (- 0.12$^{\circ}$) \\
		\bottomrule
	\end{tabular}
\end{table}
\begin{figure}[t]
	\centering
	\includegraphics[width=\linewidth]{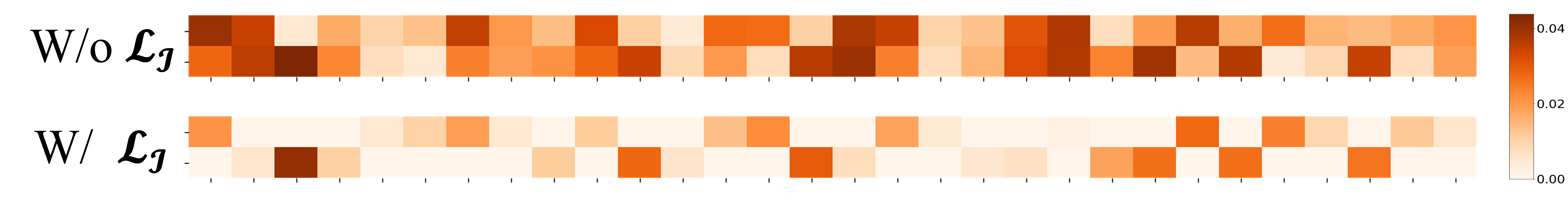}
	\caption{Visualization for regularization Jacobian matrix. Darker colors indicate larger absolute values, while lighter colors represent smaller ones.}
	\label{fig3}
\end{figure}

\textbf{Jacobian Regularization.} We also visualize the regularized Jacobian matrix with $\mathcal{L}_{\pmb{J}}$ in \cref{fig3} to validate the effectiveness of the DSCL. Removing the Jacobian regularization constraint (i.e., w/o $\mathcal{L}_{\pmb{J}}$) exhibits a clear vertical alignment pattern, the deeper colors consistently align across both output rows, signifying a coupled feature embedding, as both gaze components are sensitive to the same feature dimensions. In contrast, using $\mathcal{L}_{\pmb{J}}$ successfully disentangles this correlation. This provides direct visual evidence for our insight that forces the model to learn a more disentangled representation for each gaze component.

\begin{figure}[t]
    \centering
    \begin{minipage}[t]{0.46\linewidth}
        \centering
        \includegraphics[width=\linewidth]{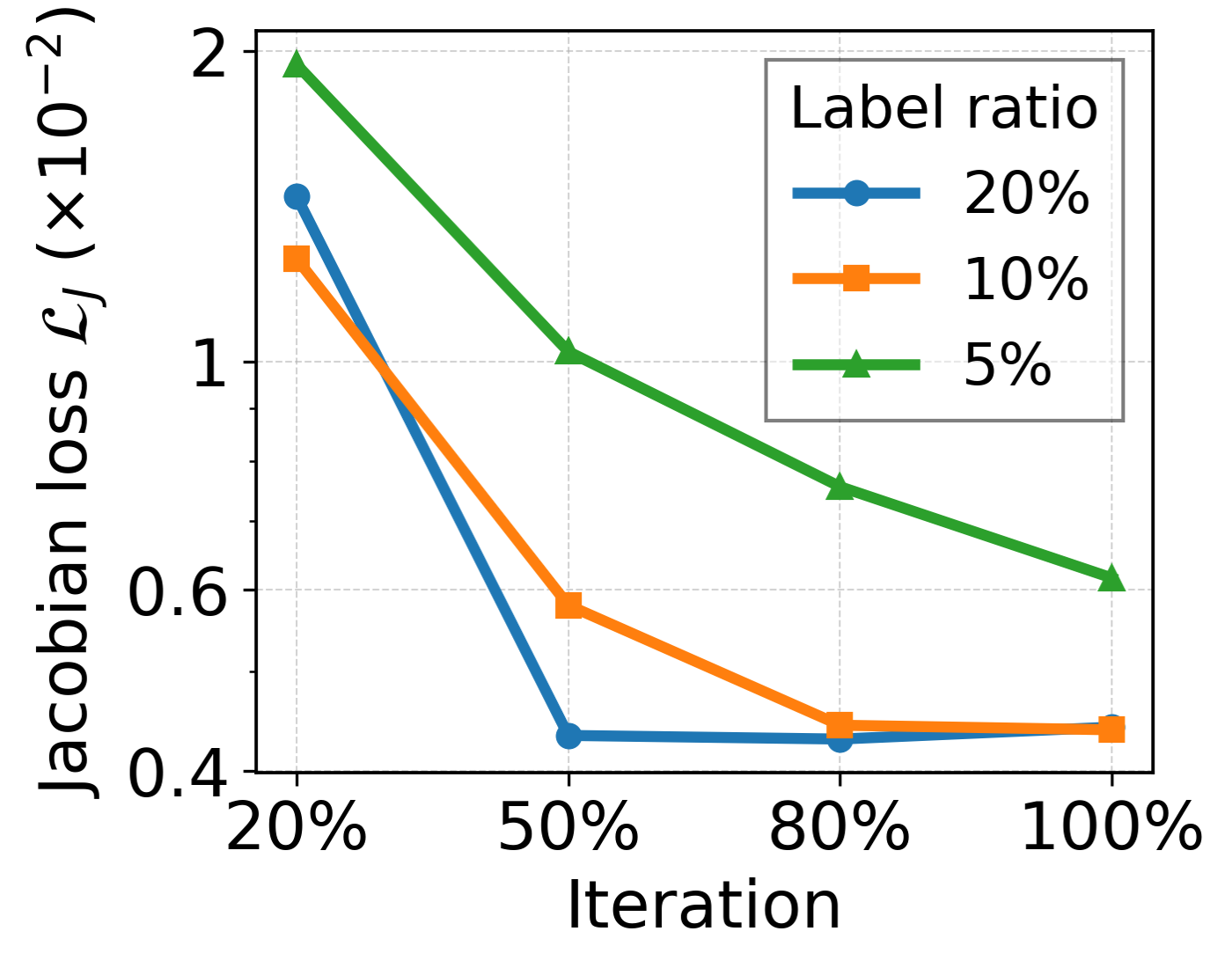}
        \centerline{\small (a) $\mathcal{L}_{\pmb{J}}$ in Pretraining}
    \end{minipage}
    \hfill
    \begin{minipage}[t]{0.48\linewidth}
        \centering
        \includegraphics[width=\linewidth]{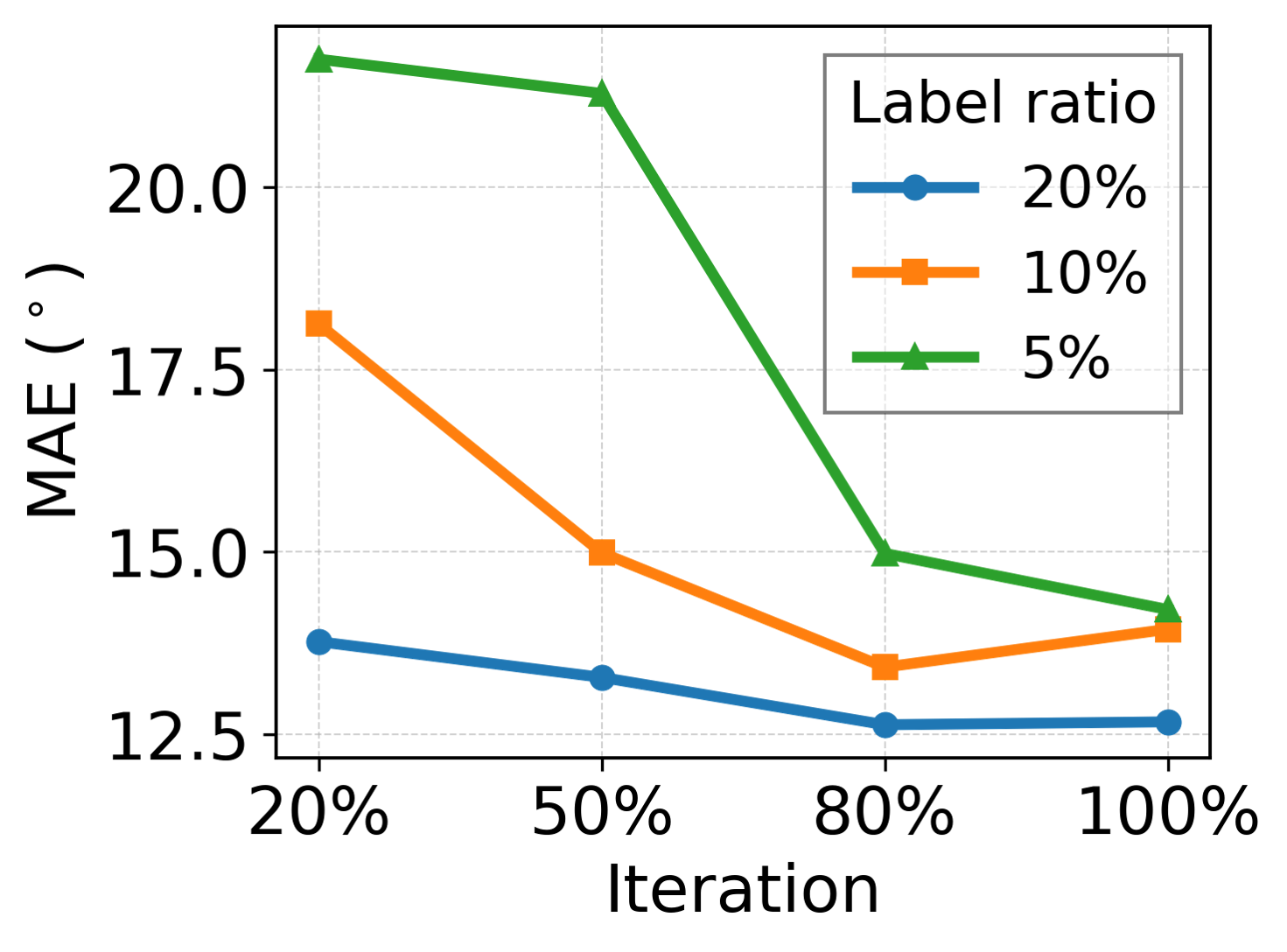}
        \centerline{\small (b) Training Error}
    \end{minipage}
    \caption{Sensitivity Analysis for Pre-trained \(\mathcal{M}\). (a) The Jacobian regularization loss \(\mathcal{L}_{\pmb{J}}\) rapidly decreases and tends to stabilize by around 50\% of iterations in the initialization phase. (b) Semi-supervised training errors significantly drop as the mask \(\mathcal{M}\) becomes stable.
    }
    \label{fig4}
\end{figure}

\textbf{Sensitivity Analysis for Pre-trained \(\mathcal{M}\).} We further examine whether the first phase can efficiently construct a reliable mask matrix $\mathcal{M}$. As shown in \cref{fig4}(a), the Jacobian loss $\mathcal{L}_{\pmb{J}}$ rapidly decreases and tends to stabilize by around 50\% of the initialization progress, indicating fast convergence of the disentangled mask. \cref{fig4}(b) shows that the corresponding training errors quickly decrease as $\mathcal{M}$ becomes stable, and then enter a stable performance plateau. This verifies that the initialization phase is lightweight and robust, providing reliable masks for subspace disentangling.

\begin{figure}[t]
	\centering
	\includegraphics[width=\linewidth]{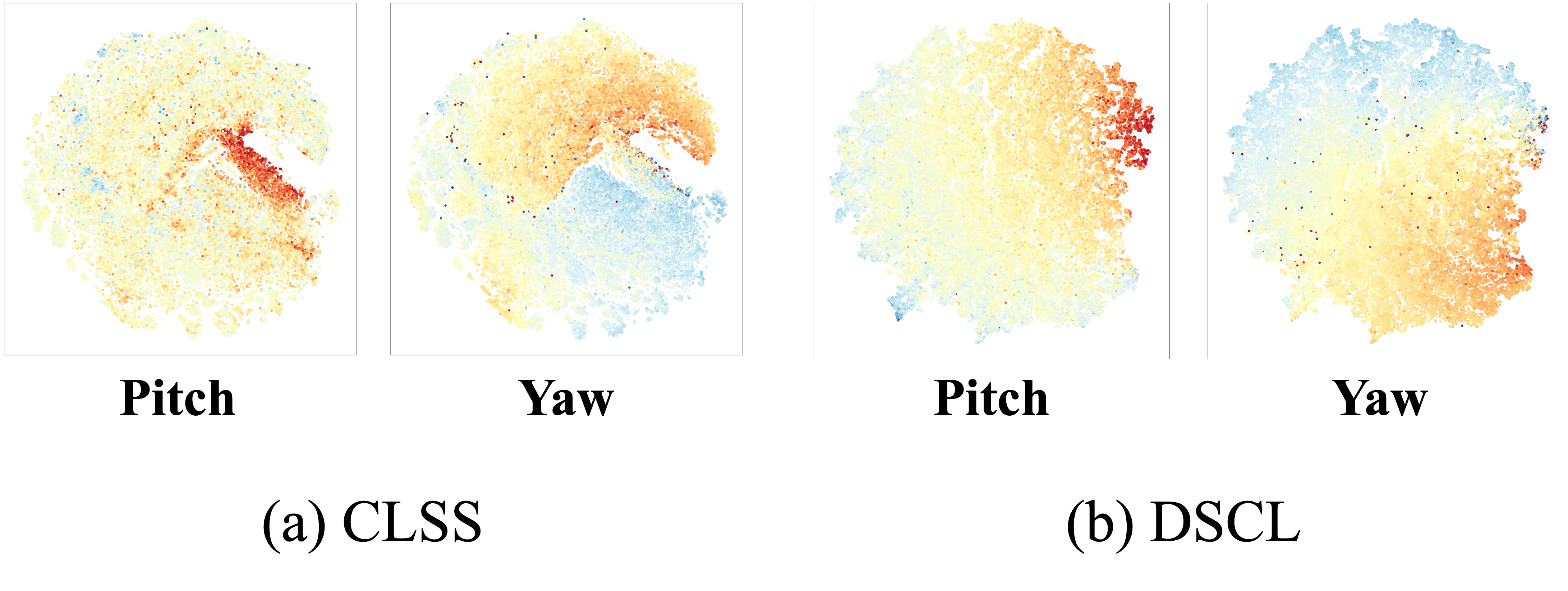}
	\caption{Visualization of the gaze feature distribution for unlabeled data. Different colors denote different gaze directions and close gaze direction share similar colors.}
	\label{fig5}
\end{figure}

\begin{figure}[t]
	\centering
	\includegraphics[width=\linewidth]{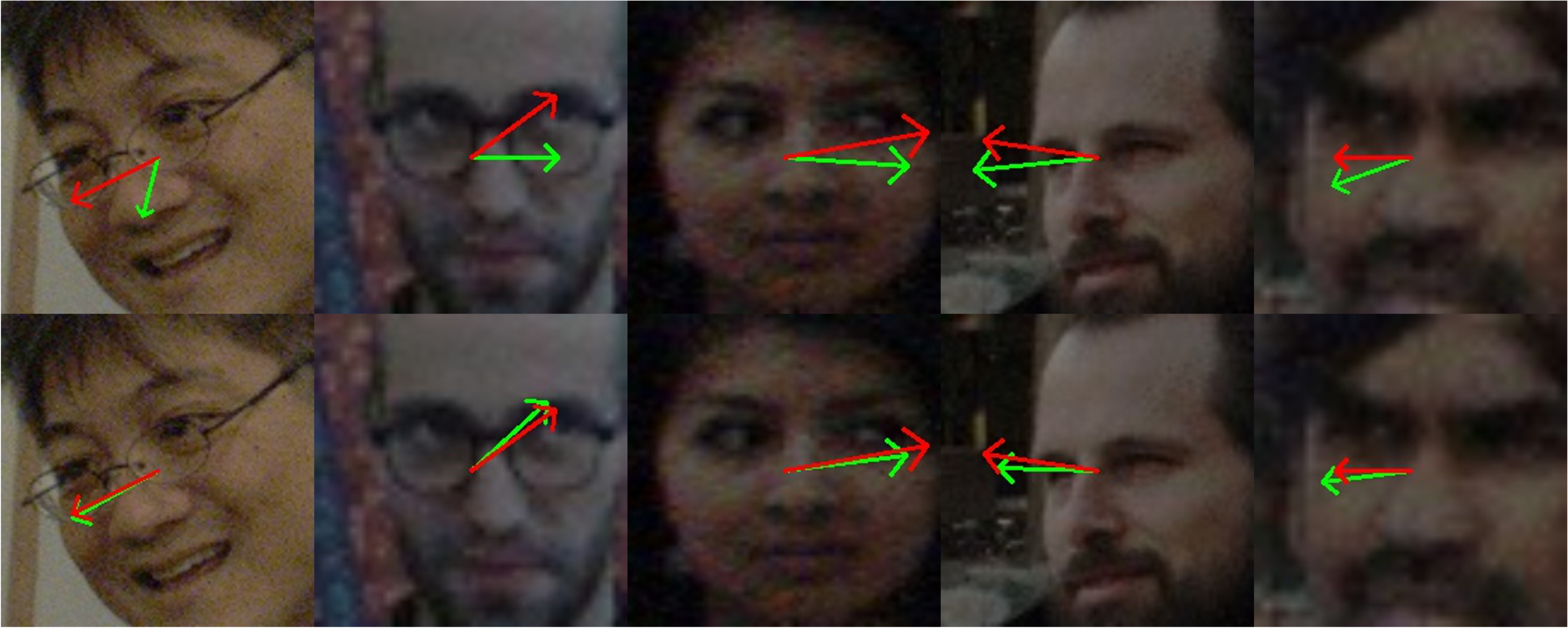}
	\caption{Visualization results of the predicted gaze for unlabeled data. The first row is from CLSS, and the bottom row is DSCL. The red and green arrows denote the ground truth and predicted gaze vector, respectively.}
	\label{fig6}
\end{figure}

\textbf{Disentangled Representation Visualization.} To better illustrate the resolution of rank ambiguity in unlabeled training samples, we first visualize the disentangled gaze representations using $t$-SNE in \cref{fig5}, where points with similar gaze direction are colored similarly. CLSS \cite{dai2023semi} produces severely mixed representations for pitch and only slightly clearer clusters for yaw. Conversely, our DSCL generates a well-ordered feature space with distinct clusters that strongly correlate with true gaze directions.

Furthermore, we visualize the prediction on the label space. As shown in \cref{fig6}, our DSCL accurately disentangles gaze directions with the unsupervised learning, which demonstrates that our method effectively overcomes the rank ambiguity in multidimensional continuous spaces. By preserving the ordinal consistency of target labels, DSCL enables highly accurate gaze estimation for unlabeled samples without suffering from dimensional conflicts.

\section{Conclusion}
This paper proposes a novel Disentangled Subspace Contrastive Learning (DSCL) framework for semi-supervised gaze estimation, which effectively reduces dependence on labor-intensive manual annotations and enables more efficient exploitation of unlabeled data. By seamlessly combining the advantages of disentangled learning and semi-supervised contrastive learning within a unified paradigm, DSCL alleviates the inherent ambiguity existing in unsupervised contrastive learning, thus facilitating robust gaze prediction. Moreover, DSCL features strong plug-and-play applicability and achieves performance comparable to fully supervised methods under both in-domain and cross-domain scenarios. This work also offers a new insight to advance practical, real-world gaze estimation tasks.

\section*{Impact Statement}
DSCL reduces the reliance of appearance-based gaze estimation on large-scale annotations by leveraging unlabeled data in a semi-supervised regression framework. Methodologically, it addresses rank ambiguity in semi-supervised contrastive regression for multi-dimensional outputs by disentangling gaze representations into component-specific subspaces and constructing ordinal relationships within each subspace. This enables unlabeled data to provide more reliable supervision for continuous gaze prediction. These advances may benefit applications where gaze labels are costly to collect, such as human–computer interaction, AR/VR, assistive systems, driver monitoring, and behavioral analysis, while improving robustness under low-label and cross-domain settings. 



\bibliography{ref}
\bibliographystyle{icml2026}

\newpage
\appendix
\crefalias{section}{appendix}
\crefalias{subsection}{appendix}
\onecolumn
\section{Theoretical Analysis}
\subsection{Supervised Contrastive Learning Consistency for Multi-target Regression}
\label{TA1}
Consider a multi-target regression setting where the label is a vector ${y} \in \mathbb{R}^M$, representing $M$ distinct regression values. Let the feature encoder be $E: \mathcal{X} \to \mathcal{Z}$ where $\mathcal{Z}$ is the unit hypersphere, since feature has been normalized. We can reformulate the Supervised Contrastive Regression loss $\mathcal{L}_{SC}$ over a batch of $N$ samples as:
\begin{equation}
    \mathcal{L}_{SC} = \sum_{i=1}^N \sum_{j=1}^N \left\| \langle E(x_i), E(x_j) \rangle - \mathcal{K}(y_i, y_j) \right\|^2_F
\end{equation}
where $\mathcal{K} : \mathbb{R}^M \times \mathbb{R}^M \rightarrow \mathbb{R}$ is a positive semi-definite kernel function measuring similarity in the multi-target label space.
The global minimum $E^*$ induces a feature space geometry that satisfies:
\begin{equation}
    G^\mathcal{Z}_{ij} = G^\mathcal{Y}_{ij}, \quad \forall i,j \in \{1, \dots, N\}
\end{equation}
where $G^\mathcal{Z}$ and $G^\mathcal{Y}$ are the Gram matrices of the features and the multi-target labels, respectively. This implies that the learned feature similarities are encouraged to match the kernel-induced geometry of the multi-target label space.

Let $S_{ij} = \langle z_i, z_j \rangle$ denote the cosine similarity between learned features, and let $T_{ij} = \mathcal{K}({y}_i, {y}_j)$ denote the ground-truth similarity derived from the multi-dimensional label vectors. The loss function can be rewritten in terms of the Frobenius norm between the two Gram matrices:
\begin{equation}
    \mathcal{L}_{SC} = \| \mathbf{S} - \mathbf{T} \|_F^2 = \sum_{i,j} (S_{ij} - T_{ij})^2
\end{equation}
To find the stationary point of the encoder, we analyze the gradient of the loss with respect to the similarity structure $S_{ij}$. The derivative is given by:
\begin{equation}
    \frac{\partial \mathcal{L}_{SC}}{\partial S_{ij}} = 2(S_{ij} - T_{ij})
\end{equation}
Setting the gradient to zero for optimality condition $\nabla \mathcal{L}_{SC} = 0$, we obtain the strictly enforced equality $S_{ij} = T_{ij}$.

\textbf{Consistency Mechanism for Multiple Targets:}
Crucially, even though $y_i$ is a $M$-dimensional vector, the kernel $\mathcal{K}(\cdot, \cdot)$ acts as a scalarization function that aggregates the geometric relationships across all $M$ dimensions. For instance, if we use a Gaussian RBF kernel $\mathcal{K}(y_i, y_j) = \exp(-\gamma \|y_i - y_j\|_2^2)$, the term $\|y_i - y_j\|_2^2$ represents the Euclidean distance in the joint label space $\mathbb{R}^M$.

Consequently, $T_{ij}$ inherently encodes the \textit{joint distribution} and the inter-correlations of the multiple targets. By forcing $S_{ij} \to T_{ij}$, the optimization process compels the feature encoder $f(\cdot)$ to learn a representation where the distance between $z_i$ and $z_j$ strictly reflects the composite distance in the multi-target label space. 
Therefore, supervised contrastive learning remains effective for multi-target regression because it aligns the topology of the feature manifold with the joint topology of the vector-valued labels, implicitly capturing the dependencies between different target dimensions without requiring separate loss functions for each target.

\subsection{Rank Ambiguity of Scalar Ranking in Multi-Target Regression}
\label{TA2}

Assume there exists a scalar ranking function $rk: \mathbb{R}^d \to \mathbb{R}$ that perfectly preserves the ordering of all target dimensions. For the function to be valid for regression refinement, it must satisfy strict monotonicity for each dimension independently: $\partial rk / \partial y_k > 0, \quad \forall k \in \{1, \dots, M\}$. Consider two samples $A$ and $B$ in a gaze estimation setting ($M=2$) with labels $y_A$ and $y_B$. Let $y_A$ be a base reference, and $y_B$ differ from $y_A$ by a positive increment in the pitch and a negative increment in the yaw:
\begin{equation}
    y_B = y_A + (\delta_1, -\delta_2), \quad \delta_1, \delta_2 > 0.
\end{equation}
The change in the scalar rank, denoted as $\Delta \mathcal{R} = rk(y_B) - rk(y_A)$, can be approximated by the total differential:$\Delta \mathcal{R} \approx \frac{\partial rk}{\partial y_1}\delta_1 + \frac{\partial rk}{\partial y_2}(-\delta_2)$. Since both partial derivatives are positive, the term $\frac{\partial rk}{\partial y_1}\delta_1$ is positive, while $\frac{\partial rk}{\partial y_2}(-\delta_2)$ is negative. Consequently, the sign of $\Delta \mathcal{R}$ depends entirely on the magnitude of the partial derivatives and the perturbations $\delta$.
    
\textbf{Case 1:} If $\Delta \mathcal{R} > 0$, the ranking implies $y_B$ is "greater" than $y_A$, contradicting the fact that $y_{B,2} < y_{A,2}$.
    
\textbf{Case 2:} If $\Delta \mathcal{R} < 0$, the ranking implies $y_B$ is "smaller" than $y_A$, contradicting the fact that $y_{B,1} > y_{A,1}$.
    
\textbf{Case 3:} If $\Delta \mathcal{R} = 0$, the distinct samples are mapped to the same rank, causing feature collapse.
    
In an unsupervised setting, the gradients $\frac{\partial rk}{\partial y_k}$ are unknown and effectively random, determined by the spectral properties of the data manifold rather than semantic meaning. Thus, the scalar rank cannot structurally enforce the component-wise order of $\mathcal{Y}$.

\section{Gaze Estimation Datasets}

In this section, we provide detailed descriptions of gaze estimation benchmarks used in the experiments.

\textbf{Gaze360} \cite{kellnhofer2019gaze360} was collected in both indoor and outdoor environments, comprising labeled 3D gaze data from 238 subjects with a diverse range of head poses and gaze directions. Following the preprocessing steps in \cite{cheng2024appearance}, we exclude images where the subject's face is not visible. The remaining data is divided into a training set of 84,902 images, which is further split into labeled and unlabeled subsets. The gaze distributions of labeled under various semi-supervised settings are illustrated in \cref{appen_fig1}. And 16,031 images are served as the test set for in-domain evaluations.

\textbf{MPIIGaze} \cite{zhang2017mpiigaze} was collected from 15 subjects in unconstrained real-world environments. Adhering to the standard evaluation protocol, we select a subset of 3,000 images from each subject. For cross-domain evaluations, consistent with previous works \cite{cheng2022puregaze, liu2021generalizing}, we utilize these 45,000 images as the target domain.

\textbf{EyeDiap} \cite{funes2014eyediap} consists of video clips recorded from 16 subjects, where gaze targets are defined by either screen targets or 3D floating balls. Adhering to the protocol in \cite{cheng2024appearance}, we select 16,674 images from 14 subjects to serve as the target domain for cross-domain evaluations.

\begin{figure}
    \centering
    \includegraphics[width=\linewidth]{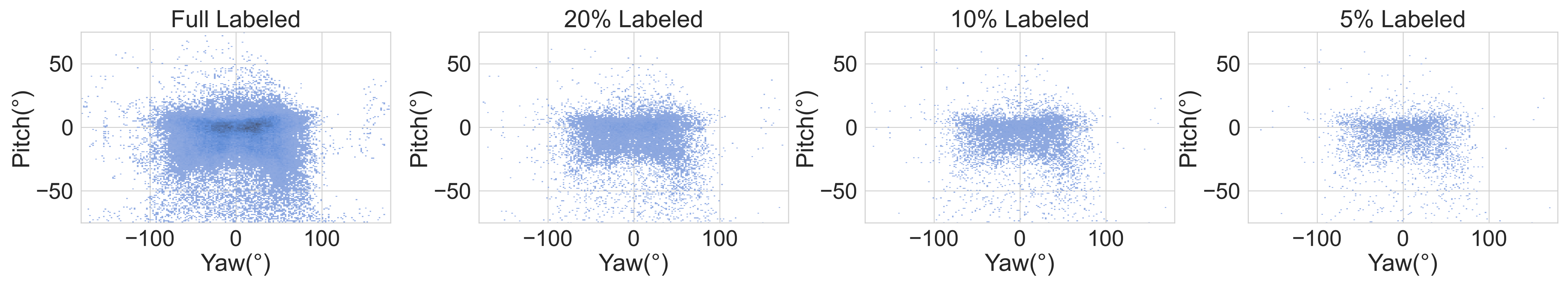}
    \caption{Sampled gaze label distribution under different semi-supervised settings.}
    \label{appen_fig1}
\end{figure}

\section{Extension to Other Multi-Target Regression Tasks}
Beyond the gaze estimation, we further conduct experiments to compare DSCL and CLSS on two classical multi-target regression tasks.

\subsection{Datasets}
\textbf{Sarcos} \cite{seeger2004gaussian}: An inverse dynamics problem for a seven degree-of-freedom SARCOS anthropomorphic robot arm. This task is to map a 21-dimensional input space to corresponding 7 joint torques, which is always treated as a multi-target regression problem.

\textbf{Dsprites}\cite{dsprites17}: a synthetic 2D dataset generated from six ground truth independent latent factors, including shape, scale, orientation, and position. The primary task is to benchmark the disentangled ability of unsupervised learning methods. We adopt it to predict latent factors, e.g., position X, Y and scale.

\subsection{Implementation Details} 
For tabular dataset, i.e., Sarcos, we use a Feed Forward Neural Network (FFNN) with hidden neuron $\{64,128,256,512 \}$ as the backbone model to extract latent feature. For Dsprites, we use ResNet-18 as the backbone. The total training epoch is set as 100 and batch size is set as 16. We utilize Adam optimizer with learning rate $2e^{-5}$ during the training. The hyperparameters $\gamma$, $w_{\textit{SC}}$, $w_{\textit{UC}}$ and $w_{\textit{UR}}$ are set to 1.0, 1.0, 0.05, 0.01. We randomly select samples for Dsprites dataset, and split $20\%$ data as test set, the remaining 80\% as training set. To conduct semi-supervised learning, we further randomly select labeled training data with specific ratio or number, and the remaining data serve as unlabeled training data. The statistics for datasets are summarized in \cref{tab7}. The performance is evaluated with four metrics: Mean Absolute Error (MAE), Root Mean Squared Error (RMSE), Pearson correlation coefficient (Pearson), Spearman's rank correlation coefficient (Spearman).

\subsection{Experimental Results}
The testing results are shown in \cref{tab8}. Compared to CLSS, the proposed DSCL framework achieves superior performances across multiple metrics.

\begin{table}[t]
  \caption{Detailed Statistics of Datasets}
  \centering
        \begin{tabular}{lccc}
          \toprule
          Dataset & \# of instance & \# of feature & \# of regression target $(M)$ \\
          \midrule
          Sarcos & 44,484 & 21 & 7\\
          Dsprites & 20,000 & 64$\times$64 & 3 \\
          \bottomrule
        \end{tabular}
  \label{tab7}
\end{table}

\begin{table}[t]
  \caption{Testing performance results on different tasks under semi-supervised setting. Mean and standard deviation are reported. $|X_l|$ denotes the ratio or number of labeled training data. $\downarrow$ means the smaller the better, and $\uparrow$ means the greater the better. The best results are \textbf{bolded}.}
  \footnotesize
  \setlength{\tabcolsep}{14pt}
  \centering
    \begin{tabular}{l|llllll}
      \toprule
      Method & Dataset & $|X_l|$ & MAE$\downarrow$ & RMSE$\downarrow$ & Pearson$\uparrow$ & Spearman$\uparrow$\\
      \midrule
      \multirow{6}{*}{\textit{Sup.}} & \multirow{3}{*}{Sarcos} & 10\% & 1.7147 (0.018) & 2.461 (0.019) & 0.9438 (0.001) & 0.9334 (0.001)\\
      & & 5\% & 2.1572 (0.028) & 2.998 (0.023)  & 0.9235 (0.001) & 0.9107 (0.001)\\
      & & 1\% & 3.6324 (0.130)  & 4.930 (0.202)  & 0.8012 (0.012)  & 0.7777 (0.009)\\
      \cmidrule{2-7}
      & \multirow{3}{*}{Dsprites} & 300 & 0.0770 (0.006) & 0.092 (0.007) & 0.9064 (0.012) &0.9122 (0.011) \\
      & & 100 & 0.0944 (0.004) & 0.119 (0.004)  & 0.8570 (0.014)  &0.8540 (0.013) \\
      & & 50 & 0.1067 (0.006) & \textbf{0.135 (0.007)}  & 0.7736 (0.008) & 0.7766 (0.008)\\
      \midrule
      \multirow{6}{*}{CLSS} & \multirow{3}{*}{Sarcos} & 10\% & 1.7637 (0.094)  & 2.528 (0.122)  & 0.9418 (0.003) & 0.9318 (0.004)\\
      & & 5\% & 2.1269 (0.009) & 2.962 (0.016)  & 0.9240 (0.003) & 0.9105 (0.003) \\
      & & 1\% & 3.0046 (0.018)   & 4.041 (0.058)  & \textbf{0.8762} (0.002)  & \textbf{0.8594 (0.001)}\\
      \cmidrule{2-7}
      & \multirow{3}{*}{Dsprites} & 300 & 0.1501 (0.017)  & 0.192 (0.021)  & 0.7316 (0.046) & 0.7345 (0.052)  \\
      & & 100 & 0.1638 (0.010)  & 0.207 (0.014)   & 0.6931 (0.025)  & 0.7009 (0.022) \\
      & & 50 & 0.2507 (0.021)  & 0.315 (0.023)   & 0.4268 (0.141) & 0.4329 (0.149)\\
      \midrule
      \multirow{6}{*}{\textbf{DSCL}} & \multirow{3}{*}{Sarcos} & 10\% & \textbf{1.4003 (0.006)} & \textbf{2.061 (0.013)}  & \textbf{0.9548 (0.003)} & \textbf{0.9437 (0.002)} \\
      & & 5\% & \textbf{1.7900 (0.017)}  & \textbf{2.577 (0.021)}  & \textbf{0.9295 (0.006)} & \textbf{0.9159 (0.008)} \\
      & & 1\% & \textbf{2.8742 (0.071)}  & \textbf{3.927 (0.067)} & 0.8710 (0.007) & 0.8521 (0.010)\\
      \cmidrule{2-7}
      & \multirow{3}{*}{Dsprites} & 300 & \textbf{0.0695 (0.008)} & \textbf{0.091 (0.007)}  & \textbf{0.9116 (0.011)} & \textbf{0.9125 (0.012)} \\
      & & 100 & \textbf{0.0806 (0.007)} & \textbf{0.108 (0.007)} & \textbf{0.8956 (0.007)}  & \textbf{0.9028 (0.016)} \\
      & & 50 & \textbf{0.1031 (0.006)} & 0.137 (0.010)   & \textbf{0.8414 (0.029)} & \textbf{0.8537 (0.024)}\\
      \bottomrule
    \end{tabular}
  \label{tab8}
\end{table}

\begin{table*}[t]
	\centering
	\small
	\caption{Full results of the domain adaptation method under the semi-supervised setting. We compute the mean and standard deviation for the results of 20 trials. The baseline for Supervised Domain Adaptation (SDA) is indicated by $\ddagger$. The best results are \textbf{bolded}.}
	\label{tab11}
	\setlength{\tabcolsep}{4pt}
	\begin{tabular}{l|lccc|lccc}
		\toprule
		\multirow{2}{*}{Method} & \multicolumn{4}{c|}{$\mathcal{D}_G \rightarrow \mathcal{D}_M$} & \multicolumn{4}{c}{$\mathcal{D}_G \rightarrow \mathcal{D}_E$} \\
		  & Full & $20\%$ & $10\%$ & $5\%$ & Full &$20\%$ & $10\%$ & $5\%$ \\
		\midrule
		PnP-GA & 6.32$^{\circ}$(0.24) & 7.99$^{\circ}$(0.29) & 8.42$^{\circ}$(0.25) & 9.85$^{\circ}$(0.45) & 6.85$^{\circ}$(0.34) &8.54$^{\circ}$(0.29) & 8.95$^{\circ}$(0.96) & 11.78$^{\circ}$(0.70) \\
        PnP-GA+CLSS & -- & 6.59$^{\circ}$(0.19) & 8.16$^{\circ}$(0.62) & 11.3$^{\circ}$(1.72) & -- &7.20$^{\circ}$(0.13) & 8.43$^{\circ}$(0.67) & 12.47$^{\circ}$(0.94) \\
		\textbf{PnP-GA+DSCL} & -- & \textbf{6.16$^{\circ}$(0.12)} & \textbf{6.55$^{\circ}$(0.30)}& \textbf{6.62$^{\circ}$(0.13)} & -- &\textbf{6.76$^{\circ}$(0.31)} & \textbf{6.93$^{\circ}$(0.57)} & \textbf{7.58$^{\circ}$(0.15)}  \\
		\midrule
		UnReGA & 5.80$^{\circ}$ & -- & -- & -- & 5.42$^{\circ}$ & -- & -- & -- \\
		UnReGA$^{-}$ & 6.21$^{\circ}$(0.10) &6.60$^{\circ}$(0.17) & 8.31$^{\circ}$(0.22) & 9.55$^{\circ}$(0.33) & 6.40$^{\circ}$(0.07) & 8.33$^{\circ}$(0.13) & 10.11$^{\circ}$(0.19) & 14.69$^{\circ}$(0.34) \\
        UnReGA$^{-}$+CLSS & -- &6.54$^{\circ}$(0.07) & 7.28$^{\circ}$(0.13) & 10.6$^{\circ}$(0.19) & -- & 7.27$^{\circ}$(0.21) & 9.21$^{\circ}$(0.44) & 13.60$^{\circ}$(0.60) \\
		\textbf{UnReGA$^{-}$+DSCL} & -- & \textbf{6.18$^{\circ}$(0.06)} & \textbf{7.18$^{\circ}$(0.07)} & \textbf{9.22$^{\circ}$(0.21)} & -- & \textbf{6.91$^{\circ}$(0.08)} & \textbf{8.80$^{\circ}$(0.22)} & \textbf{11.52$^{\circ}$(0.68)} \\
		\midrule
		Baseline$\ddagger$ & 6.28$^{\circ}$(0.13) & 7.47$^{\circ}$(0.09) & 7.17$^{\circ}$(0.34) & 9.02$^{\circ}$(0.29) & 7.26$^{\circ}$(0.06) & 9.37$^{\circ}$(0.06) & 10.1$^{\circ}$(0.22) & 10.66$^{\circ}$(0.63) \\
        Baseline$\ddagger$+CLSS & -- & 7.23$^{\circ}$(0.10) & 7.09$^{\circ}$(0.10) & 8.75$^{\circ}$(0.13)& -- & 9.28$^{\circ}$(0.21)& 9.28$^{\circ}$(0.19)& 10.18$^{\circ}$(0.47)\\
		\textbf{Baseline$\ddagger$+DSCL} & -- & \textbf{6.63$^{\circ}$(0.13)} & \textbf{6.95$^{\circ}$(0.37)} & \textbf{7.22$^{\circ}$(0.11)} & -- & \textbf{8.15$^{\circ}$(0.09)} & \textbf{8.23$^{\circ}$(0.13)} & \textbf{8.64$^{\circ}$(0.39)}\\
		\bottomrule
	\end{tabular}
\end{table*}
\section{Results for DA methods with Standard Deviations}
In \cref{tab11}, we report the full results of domain adaptation method under semi-supervised setting.

\section{More Visualization}
\begin{figure*}
    \centering
    \includegraphics[width=\linewidth]{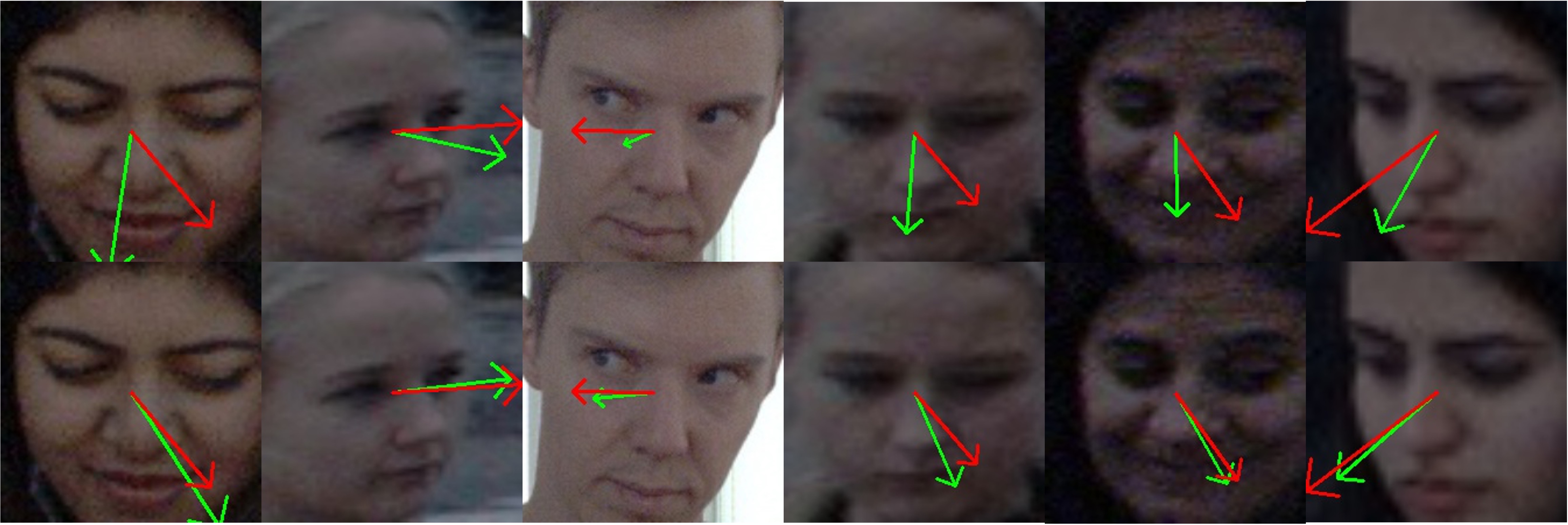}
    \caption{Visualized results for in-domain evaluations on Gaze360 testing set. The first row is from our baseline model, and the bottom row is from our DSCL method (i.e., Baseline + DSCL). The red and green arrows denote the ground truth and predicted gaze vectors, respectively.}
    \label{appen_fig2}
\end{figure*}

\begin{figure*}
    \centering
    \begin{minipage}{1.\linewidth}
        \includegraphics[width=1.0\linewidth]{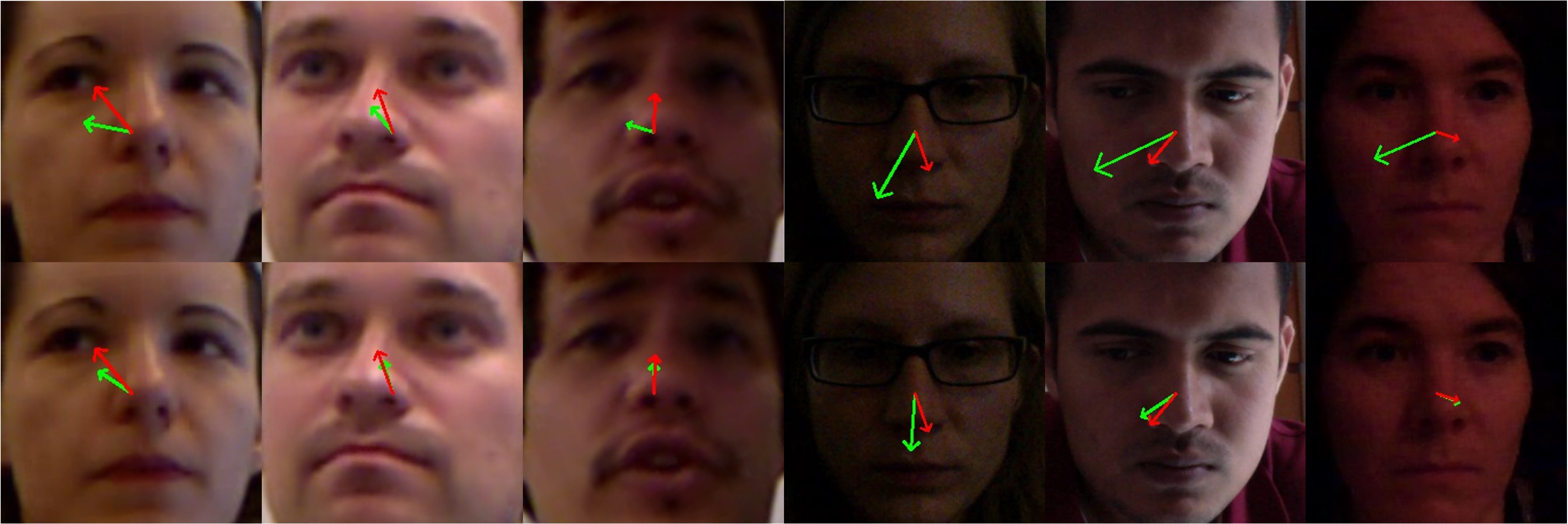}
        \centering{(a) PureGaze \textit{vs.} PureGaze + DSCL}
    \end{minipage}
    \begin{minipage}{1.\linewidth}
        \includegraphics[width=1.0\linewidth]{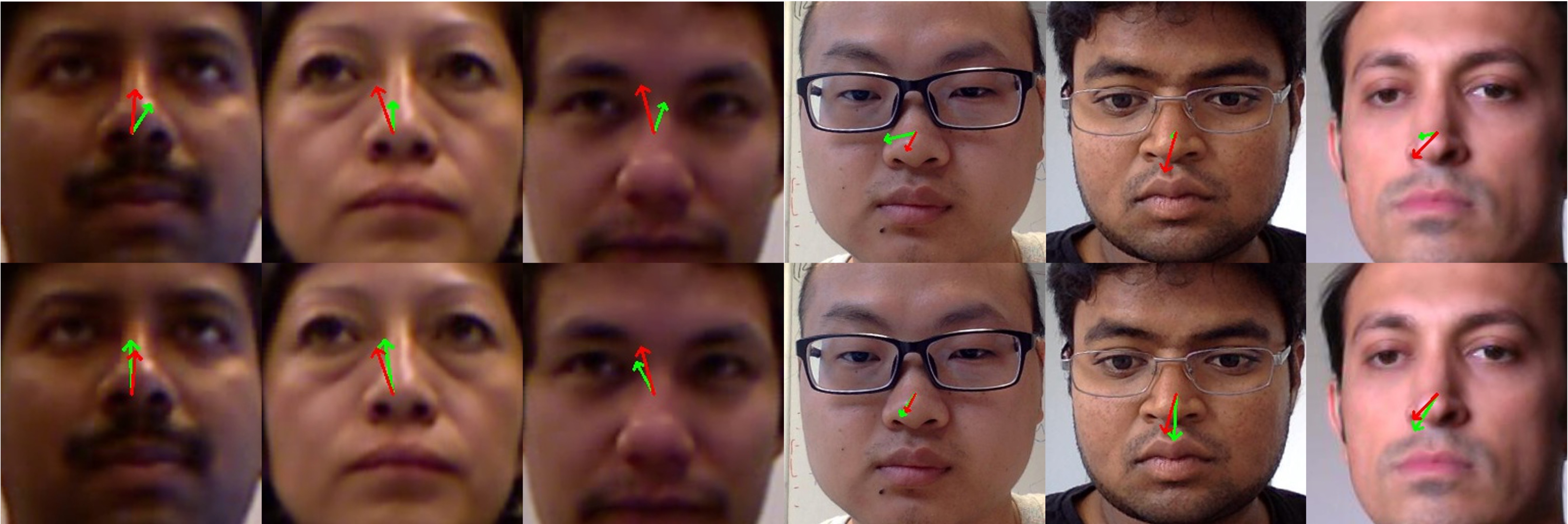}
        \centering{(b) Baseline \textit{vs.} Baseline + DSCL}
    \end{minipage}
    \caption{Visualized results for domain-generalization (DG) evaluations on MPIIGaze and EyeDiap datasets. The first row is from the original model, and the bottom row is from our DSCL method (i.e., PureGaze/Baseline + DSCL). The red and green arrows denote the ground truth and predicted gaze vectors, respectively.}
    \label{appen_fig3}
\end{figure*}

\begin{figure*}
    \centering
    \begin{minipage}{1.\linewidth}
        \includegraphics[width=1.0\linewidth]{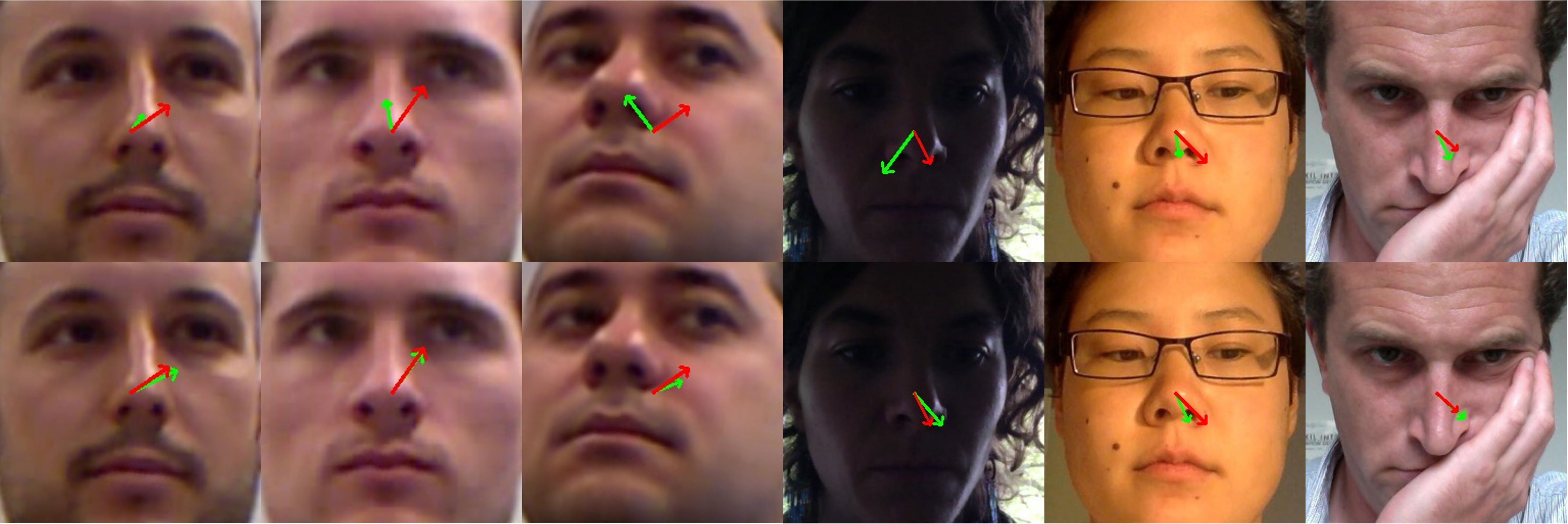}
        \centering{(a) UnReGa \textit{vs.} UnReGa + DSCL}
    \end{minipage}
    \begin{minipage}{1.\linewidth}
        \includegraphics[width=1.0\linewidth]{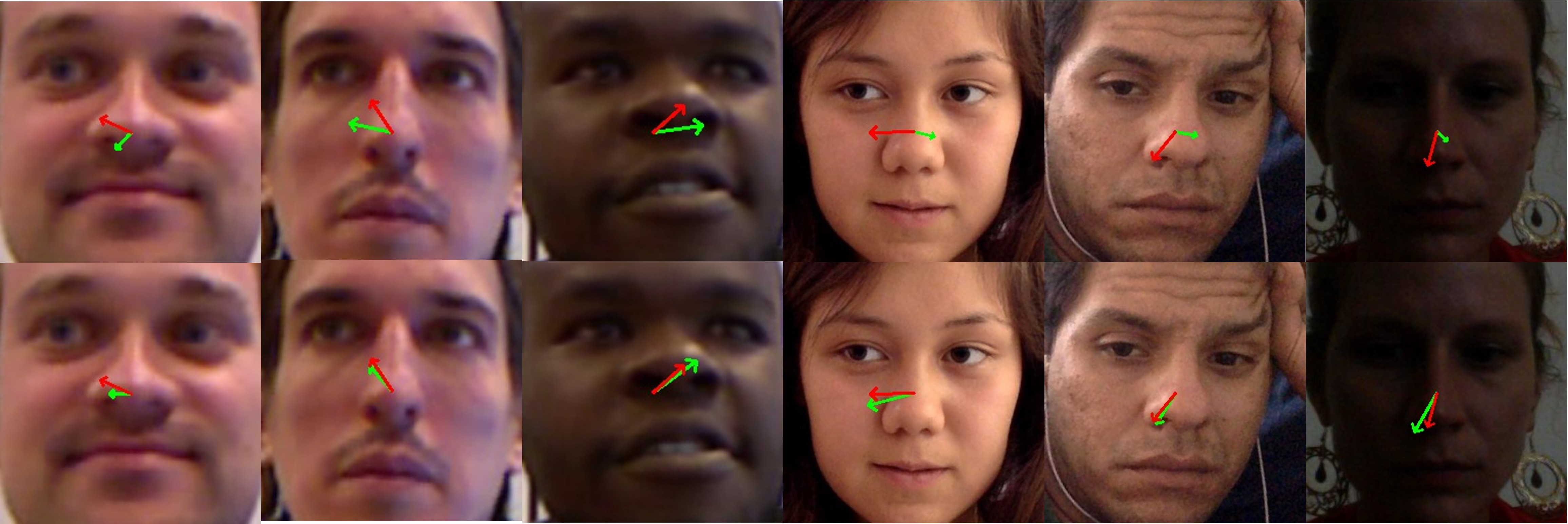}
        \centering{(b) PnP-GA \textit{vs.} PnP-GA + DSCL}
    \end{minipage}
    \caption{Visualized results for domain-adaptation (DA) evaluations on MPIIGaze and EyeDiap datasets under $10\%$ semi-supervised setting. The first row is from the original model, and the bottom row is from our DSCL method (i.e., PnP-GA/UnReGa + DSCL). The red and green arrows denote the ground truth and predicted gaze vectors, respectively.}
    \label{appen_fig4}
\end{figure*}

\begin{figure*}
    \centering
    \begin{minipage}{1.\linewidth}
        \includegraphics[width=1.0\linewidth]{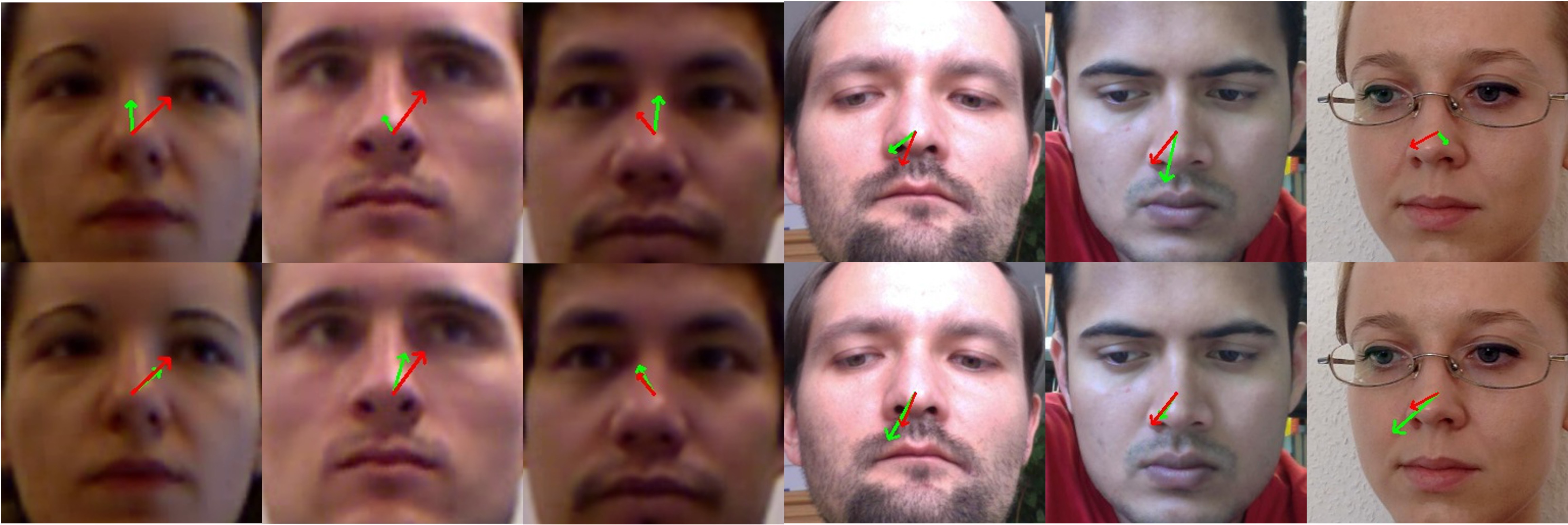}
    \end{minipage}
    \caption{Visualized results for supervised domain-adaptation (SDA) evaluations on MPIIGaze and EyeDiap datasets under $10\%$ semi-supervised setting. The first row is from our baseline model, and the bottom row is from our DSCL method (i.e., Baseline + DSCL). The red and green arrows denote the ground truth and predicted gaze vectors, respectively.}
    \label{appen_fig5}
\end{figure*}

We first visualize the results for in-domain evaluation, where we only exploit $10\%$ labeled samples, and the rest are used as unlabeled sample on the Gaze360, the results are shown in \cref{appen_fig2}. It is obvious that our baseline model achieves coarse predictions due to lack of sufficient labeled training data. Instead, integrating our DSCL framework into the baseline, it brings significant improvements.

Furthermore, we visualize the results for cross-domain evaluations. We first evaluate our method for the domain-generalization gaze estimation, where the model is trained on source domain only, and tested on target domain. \cref{appen_fig3} gives the results under the $10\%$ semi-supervised setting. Similarly, through integrating our DSCL into the existing framework, which significantly improves the generalization ability for cross-domain evaluation, even with a few labeled samples.

Moreover, we give the more visualized results for domain-adaptation (DA) evaluation, where we select some representative DA-based methods, e.g., UnReGa \cite{cai2023source} and PnP-GA \cite{liu2021generalizing}, and then inject our DSCL into them, results are shown in \cref{appen_fig4}. Our DSCL also brings obvious improvements for domain-adaptation gaze estimation, which supports our DSCL can learn more robust gaze representation from labeled and unlabeled samples.

\cref{appen_fig5} gives the visualized results for supervised domain-adaptation (SDA) evaluation, where we exploit the labeled target-domain samples during the adaptation stage. Similarly, our DSCL also demonstrates strong performance on cross-domain evaluation with a few labeled samples, which is more applicable for real-world scenarios.
\end{document}